\documentclass{article}

\PassOptionsToPackage{numbers, sort, compress}{natbib}
 \usepackage[preprint]{neurips_2026}

\usepackage[utf8]{inputenc} 
\usepackage[T1]{fontenc}    
\usepackage{hyperref}       
\usepackage{url}            
\usepackage{booktabs}       
\usepackage{amsfonts}       
\usepackage{nicefrac}       
\usepackage{microtype}      
\usepackage{xcolor}         

\usepackage{nanostyle}

\usepackage{tikz}
\usetikzlibrary{
  arrows.meta,
  calc,
  fit,
  positioning,
  shapes.geometric
}

\title{GAE Falls Short in Imperfect-Information Self-Play Reinforcement Learning}

\author{%
  Zhiyuan Fan  \\
  MIT\\
  \texttt{fanzy@mit.edu} \\
  \And
  Gabriele Farina \\
  MIT \\
  \texttt{gfarina@mit.edu} 
}

\begin{document}

\maketitle

\begin{abstract}
Competitive multi-agent reinforcement learning in imperfect-information games requires agents to act under partial observability and against adversarial opponents, necessitating stochastic policies. While self-play reinforcement learning with Proximal Policy Optimization (PPO) has achieved strong empirical success, its standard advantage estimator, generalized advantage estimation, suffers from additional variance due to the sampling of stochastic future actions. This variance is amplified in equilibrium self-play because of the stochastic nature of the equilibrium policy and persists even when the critic is exact.
We address this bottleneck by introducing \textbf{$\boldsymbol{Q}$-boosting}, a variance-reduced advantage estimator based on a centralized action-value critic, and propose \textbf{Variance-Reduced Policy Optimization (VRPO)}, incorporating this new estimator. The algorithm replaces sampled multi-step backups with a multi-step Expected SARSA$(\lambda)$ trace, computing policy expectations at each step to average out action-sampling noise, while retaining PPO's clipped objective and on-policy actor updates. Empirically, VRPO consistently achieves strong performance from mid-sized to large-scale games including Dou Dizhu and Heads-Up No-Limit Texas Hold'em.
\end{abstract}

\section{Introduction}

Multi-agent reinforcement learning (MARL) is a central challenge in reinforcement learning, where multiple agents interact competitively under imperfect information. During gameplay, players must take actions based only on the private information they observe. The outcome of the game is determined by both the hidden states and the actions of the opponents.

Learning policies with equilibrium-like robustness is a common goal in such games. This can be formulated as learning a policy that maximizes reward against the most adversarial opponent. However, such equilibrium concepts are generally intractable, and the best response is difficult to evaluate in large-scale games. Therefore, performance is often evaluated using heads-up play against a fixed opponent that is not encountered during training.

This problem is challenging for two main reasons. First, the agent needs to act under partial observability, which requires implicit reasoning about the hidden states of the opponents. Second, we want a policy that is robust to adaptive opponents, meaning that the training process needs to account for adversarial gameplay, even when such strategies may be far from the final policy. These challenges have motivated two broad paradigms: regret minimization and self-play RL.

The first line of work is based on \textbf{Counterfactual Regret Minimization (CFR)} \citep{zinkevich2007regret}. The algorithm provably converges to Nash equilibria in two-player zero-sum games. This approach is considered a leading method for solving extensive-form games such as No-Limit Texas Hold'em \citep{brown2017libratus}. Follow-up methods such as Deep CFR \citep{brown2019deep} and ReBeL \citep{brown2020combining} build on neural networks to approximate regrets and values, enabling the algorithms to scale to larger game trees and achieve strong performance in poker.

However, CFR-style methods can still struggle to scale in games such as Stratego or Dou Dizhu due to their large private-information spaces, long gameplay trajectories, and large action sets. These features make it difficult to expand the extensive-form game tree from any position, and thus regret minimization is considered both computationally and statistically inefficient in such settings.

The other, complementary line of work is based on \textbf{Self-Play Reinforcement Learning (Self-play RL)}, where policies are directly optimized using policy gradient methods. These algorithms have shown effectiveness in environments with vast private information, including Stratego \citep{perolat2022mastering,sokota2025superhuman} and Dou Dizhu \citep{zha2021douzero,yang2022perfectdou}. A recent line of work suggests that self-play can approximate equilibria in small games where the exact equilibrium can be directly evaluated, both theoretically~\citep{liu2022power,liu2024policy} and empirically~\citep{rudolph2025reevaluating}.
Among policy gradient methods, Proximal Policy Optimization (PPO) \citep{schulman2017proximal}, deployed under the centralized training with decentralized execution (CTDE) paradigm is a widely used and competitive pipeline, where decentralized actors are trained using centralized critics \citep{yu2022surprising}. 

However, even with its success, self-play RL still cannot achieve fully end-to-end training in games with a large dynamic range of payoffs, such as No-Limit Texas Hold'em. Existing work \citep{zhao2022alphaholdem} has achieved strong results in HUNL, but still relies on restricted betting-action representations to handle the large dynamic range of rewards.

In this paper, we identify a new bottleneck of self-play RL in imperfect-information games: \emph{advantage-estimation variance}. PPO-style algorithms use Generalized Advantage Estimation (GAE) \citep{schulman2015high} to aggregate multi-step temporal-difference residuals along sampled trajectories, thereby estimating the advantage with a variance--bias trade-off. GAE inherits randomness from future actions sampled in the multi-step trace. While this may not be a problem in single-agent settings, this noise is amplified in self-play RL, where equilibrium policies are generally stochastic and assign different values to different actions. Consequently, standard GAE can produce high-variance gradient estimates precisely in the regimes where stable learning is most critical.

We propose the \textbf{$\boldsymbol Q$-boosting} estimator to address this problem. This estimator reduces the variance of advantage estimation by removing intrinsic action-sampling noise, replacing GAE's sampled multi-step backup with a multi-step \emph{Expected SARSA}$(\lambda)$ trace computed from a centralized $Q$-critic. Instead of bootstrapping through sampled next actions, $Q$-boosting takes policy expectations at each backup step, averaging out noise from stochastic future actions.

We introduce \textbf{Variance-Reduced Policy Optimization (VRPO)} based on this new estimator. VRPO is a PPO-style algorithm with improved stability that achieves strong empirical performance: it consistently achieves lower exploitability than other PPO-based baselines on mid-sized benchmarks where exploitability is computable, and appears to achieve strong heads-up performance in larger real-world games such as Dou Dizhu and No-Limit Texas Hold'em. The results suggest that reducing variance in advantage estimation can yield substantial gains in equilibrium-quality self-play.

To summarize, our main contributions are:
\begin{itemize}[itemsep=0pt, topsep=0pt, leftmargin=*]
    \item We identify an additional source of variance in GAE arising from sampled future actions in stochastic self-play. We show that this variance persists even with an exact state-value critic, and derive $Q$-boosting: a PPO-compatible advantage estimator that applies an Expected SARSA($\lambda$) trace to a centralized $Q$-critic to average out future action-sampling noise.
    \item We propose \textbf{Variance-Reduced Policy Optimization (VRPO)}, a PPO-style algorithm that surpasses various baselines in self-play RL. In Dou Dizhu, VRPO outperforms the previous \emph{state-of-the-art} agent, PerfectDou~\citep{yang2022perfectdou}, under a matched training budget. In heads-up no-limit Texas Hold'em, VRPO achieves positive performance against Slumbot, a strong poker baseline, without online search, subgame solving, or blueprint-based raise sizing.
\end{itemize}

\section{Preliminary}

\subsection{Multiplayer Imperfect Information Game}

We consider zero-sum multiplayer games with imperfect information, discrete time, and discrete action sets. Unlike cooperative MARL, we focus on competitive zero-sum settings, where the sum of players' returns is zero and equilibrium quality is measured by unilateral deviation gains.
This setting captures a broad class of competitive decision-making problems, including many classical board games.

Let $n$ denote the number of players, and let $\cS$ be the set of all distinct states. We model simultaneous steps, when present, as sequences of imperfect-information states while preserving the original information structure; players do not observe actions or hidden variables that would be unavailable in the original game. We also explicitly resolve all environmental randomness and model stochastic transitions (e.g., card draws) by introducing a special player, $\textsc{Nature}$, who follows a fixed policy. This modeling still allows for a random starting state, such as private hole cards in poker.

Let $\cE \subseteq \cS$ be the set of terminal states. Each non-terminal state $s \in \cS \setminus \cE$ in the game is assigned to a player $i(s) \in \llbracket n \rrbracket$ or to the \textsc{Nature} player. When the game transitions to such a state, the assigned player must choose an action $a$ from the set of available actions~$\cA$.\footnote{Formally, the available action set may depend on the state, i.e., $\cA = \cA(s)$. Since this dependence is immaterial to our analysis, we suppress it in the notation.} 
After player $i(s)$ takes action~$a$ in state~$s$, the game transitions to a new state~$s'$, and each player~$i$ receives a reward~$r_i(s, a)$. Since we introduce an environment player to handle stochasticity, the transition is considered \emph{deterministic}: $\PP(s' \mid s, a) = 1$ for the resulting state~$s'$ under the transition kernel~$\PP$.

Since the information in the game is imperfect, the active player~$i(s)$ cannot observe the underlying state~$s$ directly. Instead, the player perceives only the private information associated with the state, denoted by $o(s) \in \cO$. A player cannot distinguish between states that yield the same private information. Following standard terminology in the literature, the set of states that share the same private information is called an \emph{information set}, denoted by $\cI(i, o) \triangleq \{s \in \cS : i(s) = i,\, o(s) = o\}.$

A gameplay trajectory is represented as a sequence $h = (s_0, a_0, s_1, \dots, s_t)$, which starts from an initial state~$s_0$ and ends in a terminal state~$s_t \in \cE$, with $\PP(s_{\tau+1} \mid s_\tau, a_\tau) = 1$ for each timestep~$0 \leq \tau < t$. The return of player~$i$ is the cumulative reward with discount factor~$\gamma \in (0, 1]$:
\[
    R_i(h) \triangleq \sum_{\tau=0}^{t-1} \gamma^{\tau} \cdot r_i(s_\tau, a_\tau).
\]

In the broader class of games we primarily consider, each player's return is determined solely by the terminal state~$s_t$. To align with the general framework above, we assign the reward to the final action by setting $r_i(s_\tau, a_\tau) = 0$ for every timestep $\tau < t - 1$, and we use a discount factor of~$\gamma = 1$. In this case, the return simplifies to: $R_i(h) = r_i(s_{t-1}, a_{t-1}).$

A policy is a mapping from private information to a distribution over available actions, denoted by $\pi_i: \cO \to \Delta(\cA)$ for each player~$i \in \llbracket n \rrbracket$. In the RL setting, each policy is parameterized by a model parameter~$\theta_i$, and we write $\pi_i(\cdot \mid o; \theta_i) \in \Delta(\cA)$ to denote the action distribution of player~$i$ given private information~$o$.

Let $\vtheta \triangleq (\theta_1, \dots, \theta_n)$ denote the collection of model parameters. We define the joint policy profile~$\vpi: \cS \to \Delta(\cA)$ as $\vpi(\cdot \mid s; \vtheta) \triangleq \pi_{i(s)} (\cdot \mid o(s); \theta_{i(s)}), $
which represents the action distribution of the player acting at state~$s$. In particular, for states~$s$ assigned to the environment player, i.e., those where $i(s) = \textsc{Nature}$, the joint policy profile~$\vpi(\cdot \mid s; \vtheta)$ is fixed and independent of the model parameters~$\vtheta$, corresponding instead to the environment's predefined transition distribution.

\subsection{State Value Function and Action Value Function}

Given a joint policy~$\vpi$, we define two quantities. The $V$-value function, or state-value function, $V^{\vpi}_i : \cS \rightarrow \RR$, denotes the expected \emph{return-to-go} for player~$i$ at state~$s$, assuming all players follow the joint policy. The $Q$-value function, $Q^{\vpi}_i : \cS \times \cA \rightarrow \RR$, denotes the expected return-to-go when a particular action~$a$ is taken in state~$s$:
\begin{align*}
    V^{\vpi}_i(s) \triangleq \Expt_{h}\!\bigg[\sum_{\tau \geq k} \gamma^{\tau-k} \cdot r_i(s_\tau, a_\tau) \;\bigg|\; s_k = s \bigg],
    Q^{\vpi}_i(s, a) \triangleq \Expt_{h}\!\bigg[\sum_{\tau \geq k} \gamma^{\tau-k} \cdot r_i(s_\tau, a_\tau) \;\bigg|\; s_k = s,\, a_k = a \bigg],
\end{align*}
where the trajectory $h = (s_0, a_0, s_1, \dots, s_t)$ is generated according to the joint policy~$\vpi$, and we condition on $s = s_k$ for some timestep~$k$. Because stochasticity is modeled explicitly by the \textsc{Nature} player, the transition kernel $\PP$ is deterministic, and the $Q$-value function can be computed from the Bellman equation in the style of Expected SARSA~\citep{van2009theoretical} as
\begin{align} \label{eq:bellman-q-function}
    Q^{\vpi}_i(s, a) = r_i(s, a) + \gamma \sum_{a' \in \cA} \vpi(a' \mid s') \cdot Q^{\vpi}_i(s', a'),
\end{align}
where the next state $s'$ satisfies $\PP(s' \mid s, a) = 1$. The $V$-value function $V^{\vpi}_i(s)$ can then be computed by summing the terms $\vpi(a \mid s) \cdot Q^{\vpi}_i(s, a)$ over the action set $\cA$.
In this way, backpropagation through this Bellman equation can be performed deterministically in~$\cO(|\cA|)$ time.

\subsection{Game-Theoretic Equilibrium}

We measure equilibrium quality by the gain available from unilateral deviations. For a joint policy~$\vpi$, the maximal deviation gain of player~$i$ is $
\Delta_i(\vpi)\triangleq
\max_{\pi'_i}\Expt_{h\sim(\pi'_i,\vpi_{-i})}[R_i(h)]
-\Expt_{h\sim\vpi}[R_i(h)].$
We define exploitability as the average deviation gain across all players:
\[
\Expl(\vpi)\triangleq \frac{1}{n}\sum_{i=1}^n \Delta_i(\vpi).
\]
A profile is an $\eps$-Nash equilibrium~\citep{roughgarden2010algorithmic} if the deviation gain $\Delta_i(\vpi)\le \eps$ for all players~$i$ (hence $\Expl(\vpi)\le\eps$). Conversely, $\Expl(\vpi)\le\eps$ implies the worst-case deviation gain $\max_i \Delta_i(\vpi)\le n\eps$. In two-player zero-sum games, \citet{liu2022power, liu2024policy} show policy gradient with external regularization converges to an $\eps$-Nash equilibrium in time polynomial in the number of states, and thus guarantees $\Expl(\vpi)\le \eps$ under our definition.

\section{Variance Reduction by \texorpdfstring{$\boldsymbol{Q}$}{Q}-Boosting}

A general framework for multi-agent reinforcement learning (MARL) is centralized training with decentralized execution (CTDE), commonly implemented via a \emph{centralized critic} and decentralized actors~\citep{yu2022surprising}. The key asymmetry is that, during training, the critic is allowed to ``see more of the game'' than any individual actor: while an actor $\pi_i(a_i \mid o_i)$ must act based only on its private information~$o_i$, the critic can condition on centralized information such as the global state, the joint action, or other agents' observations. This additional context often makes the critic substantially easier to train and improves the fidelity of the resulting baseline or advantage estimates used in policy gradient updates.

This distinction is especially important in MARL because gradient estimates inherit additional randomness from other agents' sampled actions, as well as from the non-stationarity introduced when all agents update simultaneously. A centralized critic provides a natural way to average out some of this uncertainty by taking expectations over actions that are not controlled (or not observed) by player~$i$, thereby reducing the variance of the policy gradient estimator while keeping the actors decentralized at execution.

We begin by examining the policy gradient of player~$i$. Let $J_i(\vtheta)$ be the expected cumulative reward object corresponding to the joint policy profile $\vpi(\cdot; \vtheta)$, where $\vtheta \triangleq (\theta_1, \dots, \theta_n)$ is the collection of models. According to~\citet{schulman2015high}, the policy gradient of player~$i$ can be computed as
\begin{align} \label{eq:policy-gradient}
    \nabla_{\theta_i} J_i(\vtheta) = \Expt_{h \sim \vpi} \!\bigg[\sum_{\tau: i(s_\tau ) = i} A_i^{\vpi}(s_\tau, a_\tau) \cdot \nabla_{\theta_i} \log \pi_i(a_\tau \mid o_\tau; \theta_i) \bigg],
\end{align}
where player~$i$ makes a decision at state~$s_\tau$ based on their private information $o_\tau \triangleq o_i(s_\tau)$, and the advantage function is defined as $A_i^{\vpi}(s_\tau, a_\tau) \triangleq Q_i^{\vpi}(s_\tau, a_\tau) - V_i^{\vpi}(s_\tau)$, with the $Q$-value function~$Q^{\vpi}_i$ and the $V$-value function~$V^{\vpi}_i$.
In policy gradient methods that estimate~$\nabla_{\theta_i} J_i(\vtheta)$, the gradient is typically computed by sampling states~$s_\tau$ from trajectories induced by the current policy and estimating the corresponding advantages~$A^{\vpi}_i(s_\tau, a_\tau)$.

\begin{figure*}[!t]
    \centering
    \scalebox{0.8}{
\begin{tikzpicture}[
    every node/.style={font=\small},
    state/.style={circle, draw, thick, minimum size=20pt},
    state1/.style={state, fill=fancyBlue!40},
    state2/.style={state, fill=fancyRed!40},
    state3/.style={state, fill=fancyColor!40},
    trans/.style={->, shorten >=1pt, >={Stealth[round]}, semithick},
    trans0/.style={trans, dashed},
    trans1/.style={trans, very thick},
    infoset/.style={draw, thick, rounded corners=13pt, inner sep=3pt},
]
    \begin{scope}
        \node [state1] at (0,     0   ) (r)  {$\boldsymbol{\emptyset}$};
        \node [state1] at (1.5 ,  1.2 ) (h)  {$\mathbf{h}$};
        \node [state1] at (1.5 , -1.2 ) (t)  {$\mathrm{t}$};
        \node          at (3.5 ,  1.9 ) (hh) {$+1$};
        \node          at (3.5 ,  0.5 ) (ht) {$\boldsymbol{-1}$};
        \node          at (3.5 , -0.5 ) (th) {$-1$};
        \node          at (3.5 , -1.9 ) (tt) {$+1$};
        
        \node [anchor=east] at ($(h)+(-14pt,0pt)$) {$V_1(\mathrm{h}) = -1$};
        
        \draw [trans1] (r) to node[pos=0.2, yshift= 10pt] {$\mathbf{h}$} (h);
        \draw [trans ] (r) to node[pos=0.2, yshift=-10pt] {$\mathrm{t}$} (t);
        \draw [trans0] (h) to node[pos=0.35, yshift= 7pt] {$\mathrm{h}$} (hh);
        \draw [trans1] (h) to node[pos=0.35, yshift=-7pt] {$\mathbf{t}$} (ht);
        \draw [trans ] (t) to node[pos=0.35, yshift= 7pt] {$\mathrm{h}$} (th);
        \draw [trans0] (t) to node[pos=0.35, yshift=-7pt] {$\mathrm{t}$} (tt);
        
        \node [align=center, anchor=south] at (1.5, 2.2) {
            \textbf{GAE for Deterministic Policy}
        };
        \node [align=center, anchor=north] at (1.5,-2.2) {
            \textbf{Ground Truth}: $A^{\vpi}_1(\emptyset,\mathrm{h}) = 0$\\ [2pt]
            \textbf{Under Sampled Trajectory $\mathbf{ht}$:}\\ [2pt]
            $\delta_{1,1}^{\vphantom+} = r_1(\emptyset, \mathrm{h}) + V_1(\mathrm{h}) - V(\emptyset) = 0$\\[2pt]
            $\delta_{1,2}^{\vphantom+} = r_1(\mathrm{h},\mathrm{t}) + V_1(\mathrm{ht}) - V(\mathrm{h}) = 0$\\[2pt]
            $\hat A^{\mathrm{GAE}}_1 = \delta_{1,1} + \delta_{1,2} = 0$\\[6pt]
            \textbf{Low Variance}
        };
    \end{scope}
    \begin{scope}[xshift=165pt]
        \node [state2] at (0,     0   ) (r)  {$\boldsymbol{\emptyset}$};
        \node [state2] at (1.5 ,  1.2 ) (h)  {$\mathbf{h}$};
        \node [state2] at (1.5 , -1.2 ) (t)  {$\mathrm{t}$};
        \node          at (3.5 ,  1.9 ) (hh) {$+1$};
        \node          at (3.5 ,  0.5 ) (ht) {$\boldsymbol{-1}$};
        \node          at (3.5 , -0.5 ) (th) {$-1$};
        \node          at (3.5 , -1.9 ) (tt) {$+1$};

        \node [anchor=east] at ($(h)+(-14pt,0pt)$) {$V_1(\mathrm{h}) = 0$};
        
        \draw [trans1] (r) to node[pos=0.2, yshift= 10pt] {$\mathbf{h}$} (h);
        \draw [trans ] (r) to node[pos=0.2, yshift=-10pt] {$\mathrm{t}$} (t);
        \draw [trans ] (h) to node[pos=0.35, yshift= 7pt] {$\mathrm{h}$} (hh);
        \draw [trans1] (h) to node[pos=0.35, yshift=-7pt] {$\mathbf{t}$} (ht);
        \draw [trans ] (t) to node[pos=0.35, yshift= 7pt] {$\mathrm{h}$} (th);
        \draw [trans ] (t) to node[pos=0.35, yshift=-7pt] {$\mathrm{t}$} (tt);
        \node [infoset, fit=(h)(t)] (Iht) {};
        \draw [thick] ($(h)+(36:20pt)$) arc[start angle=36, end angle=-36, radius=20pt];
        \draw [thick] ($(t)+(36:20pt)$) arc[start angle=36, end angle=-36, radius=20pt];
        
        \node at ($(h)+(50pt,0)$) {Mixed};
        \node at ($(t)+(50pt,0)$) {Mixed};

        \node [align=center, anchor=south] at (1.5, 2.2) {
            \textbf{GAE for Stochastic Policy}
        };
        \node [align=center, anchor=north] at (1.5,-2.2) {
            \textbf{Ground Truth}: $A^{\vpi}_1(\emptyset,\mathrm{h}) = 0$\\ [2pt]
            \textbf{Under Sampled Trajectory $\mathbf{ht}$:}\\ [2pt]
            $\delta_{1,1}^{\vphantom+} = r_1(\emptyset, \mathrm{h}) + V_1(\mathrm{h}) - V(\emptyset) = 0$\\[2pt]
            $\delta_{1,2}^{\vphantom+} = r_1(\mathrm{h},\mathrm{t}) + V_1(\mathrm{ht}) - V(\mathrm{h}) = \boldsymbol{-1}$\\[2pt]
            $\hat A^{\mathrm{GAE}}_1 = \delta_{1,1} + \delta_{1,2} = \boldsymbol{-1}$\\[6pt]
            \textbf{High Variance}
        };
    \end{scope}
    \begin{scope}[xshift=330pt]
        \node [state3] at (0,     0   ) (r)  {$\boldsymbol{\emptyset}$};
        \node [state3] at (1.5 ,  1.2 ) (h)  {$\mathbf{h}$};
        \node [state3] at (1.5 , -1.2 ) (t)  {$\mathrm{t}$};
        \node          at (3.5 ,  1.9 ) (hh) {$+1$};
        \node          at (3.5 ,  0.5 ) (ht) {$\boldsymbol{-1}$};
        \node          at (3.5 , -0.5 ) (th) {$-1$};
        \node          at (3.5 , -1.9 ) (tt) {$+1$};
        
        \node [anchor=east] at ($(h)+(-14pt,0pt)$) {$\boldsymbol{Q_1(\mathrm{h}, \mathrm{t}) = -1}$};
        
        \draw [trans1] (r) to node[pos=0.2, yshift= 10pt] {$\mathbf{h}$} (h);
        \draw [trans ] (r) to node[pos=0.2, yshift=-10pt] {$\mathrm{t}$} (t);
        \draw [trans ] (h) to node[pos=0.35, yshift= 7pt] {$\mathrm{h}$} (hh);
        \draw [trans1] (h) to node[pos=0.35, yshift=-7pt] {$\mathbf{t}$} (ht);
        \draw [trans ] (t) to node[pos=0.35, yshift= 7pt] {$\mathrm{h}$} (th);
        \draw [trans ] (t) to node[pos=0.35, yshift=-7pt] {$\mathrm{t}$} (tt);
        \node [infoset, fit=(h)(t)] (Iht) {};
        \draw [thick] ($(h)+(36:20pt)$) arc[start angle=36, end angle=-36, radius=20pt];
        \draw [thick] ($(t)+(36:20pt)$) arc[start angle=36, end angle=-36, radius=20pt];

        \node at ($(h)+(50pt,0)$) {Mixed};
        \node at ($(t)+(50pt,0)$) {Mixed};

        \node [align=center, anchor=south] at (1.5, 2.2) {
            \textbf{ $\boldsymbol Q$-boosting for Stochastic Policy}
        };
        \node [align=center, anchor=north] at (1.5,-2.2) {
            \textbf{Ground Truth}: $A^{\vpi}_1(\emptyset,\mathrm{h}) = 0$\\ [2pt]
            \textbf{Under Sampled Trajectory $\mathbf{ht}$:}\\ [2pt]
            $\delta_{1,1}^+ = r_1(\emptyset, \mathrm{h}) + V_1(\mathrm{h}) - \boldsymbol{Q_1(\emptyset, \mathrm{h})} = 0$\\[2pt]
            $\delta_{1,2}^+ = r_1(\mathrm{h},\mathrm{t}) + V_1(\mathrm{ht}) - \boldsymbol{Q_1(\mathrm{h}, \mathrm{t})} = 0$\\[2pt]
            $\hat A^{\mathrm{boost}}_1 = Q_1(\emptyset, \mathrm{h}) - V(\emptyset) + \delta_{1,1}^+ + \delta_{1,2}^+ = 0$\\[6pt]
            \textbf{Low Variance}
        };
    \end{scope}
\end{tikzpicture}
}
    \vspace{-12pt}
    \caption{
        We compare GAE and $Q$-boosting in the matching-pennies game, where the first player receives a reward of~$+1$ if their action matches the second player's and~$-1$ otherwise. With perfect information, the second player's policy is deterministic, and GAE exhibits low variance. However, under imperfect information, the equilibrium policy of the second player mixes uniformly between playing $\mathrm{h}$ and $\mathrm{t}$. The action value of taking any action at state $\mathrm{h}$ generally differs from the state value~$V(\mathrm{h})$. As a result, GAE inherits variance from sampling future actions at state $\mathrm{h}$, leading to high-variance advantage estimates even with an exact critic. In contrast, $Q$-boosting replaces the state value with the action value, eliminating variance due to action sampling and producing a low-variance estimator. The discount factor is $\gamma = 1$ and trace parameter is $\lambda = 1$ for clarity.
    }
    \label{fig:gae-vs-qboosting}
    \vspace{-9pt}
\end{figure*}

\subsection{The Intrinsic Variance of GAE}

An unbiased estimator of~$A^{\vpi}_i(s_\tau, a_\tau)$ can be derived from the return-to-go of the sampled trajectory. A more sophisticated method is the generalized advantage estimator (GAE) \citep{schulman2015high} used in PPO~\citep{schulman2017proximal}. Given a $V$-critic~$V_i(\cdot; \phi_i) : \cS \rightarrow \RR$ parameterized by~$\phi_i$, the temporal-difference (TD) residual is defined as $\delta_{i,\tau} \triangleq r_i(s_\tau, a_\tau) + \gamma V_i(s_{\tau+1}; \phi_i) - V_i(s_{\tau}; \phi_i).$
For a trace parameter~$\lambda \in [0, 1]$, the generalized advantage estimate is given by
\[
    \hat A^{\mathrm{GAE}}_{i,\tau} \triangleq \sum_{\tau' \geq \tau} (\lambda \gamma)^{\tau' - \tau} \cdot \delta_{i,\tau'}.
\]

Although GAE typically reduces variance relative to naïve Monte Carlo return-to-go estimation, it cannot eliminate the randomness arising from action sampling. 
The variance persists even when the $V$-critic is perfect, i.e., $V_i(\cdot; \phi_i) = V_i^{\vpi}$. In this case, the one-step TD residual can be computed as
$
    \delta_{i,\tau'} = r_i(s_{\tau'}, a_{\tau'}) + \gamma V^{\vpi}_i(s_{\tau'+1}) - V^{\vpi}_i(s_{\tau'}). 
$
For the primary term with $\tau' = \tau$, the expected contribution of the TD residual is 
$ 
    \Expt[\delta_{i,\tau} \mid s_{\tau}, a_\tau] = A^{\vpi}_i(s_\tau, a_\tau),
$
which provides an unbiased estimator of the advantage. For any auxiliary term with $\tau' > \tau$, however, the expected contribution is 
$
    \Expt[\delta_{i,\tau'} \mid s_{\tau}, a_\tau] = \Expt[A^{\vpi}_i(s_{\tau'}, a_{\tau'}) \mid s_{\tau}, a_\tau] = 0.
$
But the TD residual~$\delta_{i,\tau'}$ are generally nonzero at equilibrium in imperfect-information games which causes these auxiliary terms to be generally nonzero and incur nonzero variance. We illustrate this variance with a concrete example in Figure~\ref{fig:gae-vs-qboosting}.

Therefore, even with an exact value function, the one-step TD residual remains a non-degenerate random variable unless the policy is deterministic at~$s_{\tau'}$ or all actions have identical $Q$-values; these conditions generally do not hold in imperfect-information games. Since GAE aggregates these residuals across multiple time steps when~$\lambda > 0$, this action-sampling noise is propagated through the multi-step advantage estimator rather than eliminated.

\subsection{The \texorpdfstring{$\boldsymbol{Q}$}{Q}-Boosting Estimator}

We introduce \textbf{$\boldsymbol Q$-boosting} to address the issue mentioned above. The central idea is to replace the sampled multi-step TD trace used in GAE with a multi-step \emph{Expected SARSA} trace computed from a centralized $Q$-critic. The additional variance of GAE arises because the multi-step trace depends on sampled future actions whose action values may differ. $Q$-boosting aims to reduce this source of noise by averaging over those future action distributions.

Specifically, given a $Q$-critic~$Q_i(\cdot; \phi_i): \cS \times \cA \to \RR$ and joint policy profile~$\vpi(\cdot; \vtheta)$, we define the state value as
\[
    V_{i}^{\vpi}(s; \phi_i) \triangleq \sum_{a \in \cA} \vpi(a \mid s; \vtheta) \cdot Q_i(s, a; \phi_i).
\]
With one-step Expected SARSA TD residual
$
    \delta^+_{i,\tau}
    \triangleq
    r_i(s_\tau, a_\tau)
    + \gamma V_{i}^{\vpi}(s_{\tau+1}; \phi_i)
    - Q_i(s_\tau, a_\tau; \phi_i)
$ and trace parameter $\lambda \in [0,1]$, $Q$-boosting estimates the advantage as
\begin{align} \label{eq:q-boosting-advantage}
    \hat A^{\mathrm{boost}}_{i,\tau}
    \triangleq
    Q_i(s_\tau, a_\tau; \phi_i) - V_{i}^{\vpi}(s_{\tau}; \phi_i) 
    + \sum_{\tau' \geq \tau} (\lambda \gamma)^{\tau' - \tau} \cdot \delta^+_{i,\tau'}.
\end{align}

The $Q$-boosting estimator also induces a multi-step Bellman-style update for the $Q$-critic by accumulating discounted Expected SARSA TD residuals along the trajectory, analogous to GAE:
\begin{align} \label{eq:q-boosting-q-target}
    Q^{\mathrm{target}}_i(s_{\tau}, a_{\tau}) \gets Q_i(s_\tau, a_{\tau}; \phi_i) + 
    \sum_{\tau' \geq \tau} (\lambda \gamma)^{\tau' - \tau} \cdot \delta^+_{i,\tau'}.
\end{align}
This target can be viewed as a multi-step generalization of the Expected SARSA update, analogous to the $\lambda$-return in GAE but applied to action-value functions.

The trace in $Q$-boosting follows the Expected-SARSA$(\lambda)$ form~\citep{van2009theoretical}, but is used here to construct PPO-style advantage estimates for stochastic self-play. Whereas COMA uses a one-step counterfactual baseline~\citep{foerster2018counterfactual} and Tree Backup/Retrace provide value-evaluation traces~\citep{precup2000eligibility,munos2016safe}, $Q$-boosting replaces GAE's sampled future TD residuals with policy-averaged multi-step residuals.

Compared to GAE, the residual $\delta^+_{i,\tau'}$ uses the action-conditioned value $Q_i(s_{\tau'},a_{\tau'};\phi_i)$ rather than the state value $V_i(s_{\tau'};\phi_i)$. Consequently, when $Q_i=Q_i^{\vpi}$, each Expected-SARSA residual vanishes pathwise, so the multi-step trace no longer carries noise from sampled future actions. With an approximate $Q$-critic, $Q$-boosting trades this action-sampling variance for critic approximation error.

Formally, we have the following result. The theorem is proved in Appendix~\ref{app:proof-qboost-unbiased-pg}.
\begin{theorem}[Unbiasedness and variance reduction of Q-boosting]
\label{thm:qboost-unbiased-vr}
Fix a timestep $\tau$ at which player $i(s_\tau)=i$ acts, and suppose
trajectories are sampled on-policy from $\vpi$. With trace parameter
$\lambda=1$, the $Q$-boosting estimator based on any $Q$-critic
$Q_i(\cdot;\phi_i)$ gives an unbiased estimator of the score-function
policy-gradient term:
\[
    \Expt[
        \hat A^{\mathrm{boost}}_{i,\tau} \cdot
        \nabla_{\theta_i}\log \pi_i(a_\tau\mid o_\tau;\theta_i)
        \mid s_\tau
    ]
    =
    \Expt[
        A_i^{\vpi}(s_\tau,a_\tau) \cdot
        \nabla_{\theta_i}\log \pi_i(a_\tau\mid o_\tau;\theta_i)
        \mid s_\tau
    ].
\]
Moreover, if some future state $s_u$ reached with positive probability after $(s_\tau,a_\tau)$ exhibits non-zero variance in action-value, 
$\Var_{a\sim\vpi(\cdot\mid s_u)}[Q_i^{\vpi}(s_u,a)]>0$, for any $\lambda\in(0,1]$, there exists $\xi^\lambda_{i,\tau}>0$ such that any $Q$-critic satisfying
\(
    \|Q_i(\cdot;\phi_i)-Q_i^{\vpi}\|_\infty<\xi^\lambda_{i,\tau}
\)
yields smaller mean-squared advantage-estimation error than GAE with an exact $V$-critic and the same trace parameter $\lambda$:
\[
    \Expt\!\big[
        \big(
            \hat A^{\mathrm{boost}}_{i,\tau}
            -
            A_i^{\vpi}(s_\tau,a_\tau)
        \big)^2
        \,\big|\, s_\tau,a_\tau
    \big]
    <
    \Expt\!\big[
        \big(
            \hat A^{\mathrm{GAE}}_{i,\tau}
            -
            A_i^{\vpi}(s_\tau,a_\tau)
        \big)^2
        \,\big|\, s_\tau,a_\tau
    \big].
\]
\end{theorem}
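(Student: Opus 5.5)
Both parts rest on a telescoping decomposition of the $Q$-boosting trace. Write $\hat Q_u \triangleq Q_i(s_u,a_u;\phi_i)$ and $\hat V_u \triangleq V^{\vpi}_i(s_u;\phi_i)$, so that $\delta^+_{i,u} = r_i(s_u,a_u)+\gamma\hat V_{u+1}-\hat Q_u$. Adding and subtracting $\gamma\hat Q_{u+1}$ gives $\delta^+_{i,u} = r_i(s_u,a_u)+\gamma\hat Q_{u+1}-\hat Q_u - \gamma(\hat Q_{u+1}-\hat V_{u+1})$. Here $\hat V_{u+1}$ is the $\vpi$-average of $\hat Q(s_{u+1},\cdot)$. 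Hence the correction term $\hat Q_{u+1}-\hat V_{u+1}$ has zero conditional mean given $s_{u+1}$, and this holds for an arbitrary critic.

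\textbf{Unbiasedness ($\lambda=1$).} With $\lambda=1$ the sum telescopes; with terminal values set to zero,
\[
\hat A^{\mathrm{boost}}_{i,\tau} = \sum_{u\ge\tau}\gamma^{u-\tau} r_i(s_u,a_u) - \hat V_\tau - \sum_{u>\tau}\gamma^{u-\tau}\big(\hat Q_u-\hat V_u\big).
\]
Condition on $(s_\tau,a_\tau)$ and use the tower property over $s_u$. The first term has mean $Q^{\vpi}_i(s_\tau,a_\tau)$, and every correction term has mean zero. So $\Expt[\hat A^{\mathrm{boost}}_{i,\tau}\mid s_\tau,a_\tau] = Q^{\vpi}_i(s_\tau,a_\tau)-\hat V_\tau$.

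The leftover $\hat V_\tau$ depends only on $s_\tau$. Multiplying by the score and taking the expectation over $a_\tau\sim\pi_i(\cdot\mid o_\tau)$ removes it, since $\sum_a \nabla\pi_i(a\mid o_\tau)=0$. The same identity lets me replace $Q^{\vpi}_i$ by $A^{\vpi}_i$, because $V^{\vpi}_i(s_\tau)$ is also a state-only baseline. This gives the claimed equality.

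\textbf{Variance comparison, $Q$-boosting side.} Fix $\lambda\in(0,1]$ and condition on $(s_\tau,a_\tau)$. With an exact critic, the Bellman equation \eqref{eq:bellman-q-function} makes every $\delta^+$ vanish pathwise. The leading term is then $Q^{\vpi}_i-V^{\vpi}_i = A^{\vpi}_i$, so the error is identically $0$.

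For a general critic, write $\epsilon = \|Q_i-Q_i^{\vpi}\|_\infty$. Each $\delta^+_{i,u}$ equals $\gamma(\hat V_{u+1}-V^{\vpi}_i(s_{u+1})) - (\hat Q_u - Q^{\vpi}_i(s_u,a_u))$ pathwise, so $|\delta^+_{i,u}|\le 2\epsilon$. The leading term differs from $A^{\vpi}_i$ by at most $2\epsilon$. Since the game is finite, trajectories have length at most some $T$, and the total error is at most $2\epsilon(1+T)$. The mean-squared error is therefore at most $4\epsilon^2(1+T)^2$.

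\textbf{Variance comparison, GAE side.} With an exact $V$-critic, $\delta_{i,\tau}$ and all later $\delta_{i,u}$ are martingale differences with respect to the filtration generated by $(s_u,a_u)$. Their conditional means are $A^{\vpi}_i(s_\tau,a_\tau)$ for $u=\tau$ and $0$ for $u>\tau$. Since transitions are deterministic, $\delta_{i,\tau}=A^{\vpi}_i(s_\tau,a_\tau)$ exactly. For $u>\tau$, $\delta_{i,u}=Q^{\vpi}_i(s_u,a_u)-V^{\vpi}_i(s_u)$.

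Orthogonality of martingale increments then gives
\[
\Expt\big[(\hat A^{\mathrm{GAE}}_{i,\tau}-A^{\vpi}_i)^2\mid s_\tau,a_\tau\big] = \sum_{u>\tau}(\lambda\gamma)^{2(u-\tau)}\,\Expt\big[\Var_{a\sim\vpi(\cdot\mid s_u)}[Q^{\vpi}_i(s_u,a)]\big].
\]
By hypothesis, some $s_u$ with positive reach probability has positive action-value variance. Its weight $(\lambda\gamma)^{2(u-\tau)}$ is positive because $\lambda>0$, and it requires $\gamma>0$, which I am assuming. So this sum is a constant $c>0$.

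\textbf{Conclusion and main difficulty.} Choosing $\xi^\lambda_{i,\tau} = \sqrt{c}/(2(1+T))$ finishes the comparison. The main obstacle is the bookkeeping for this $\xi$: bounding the critic-error accumulation uniformly, and treating an infinite horizon if it must be allowed. In that case I would use the geometric sum $\sum(\lambda\gamma)^k$ when $\lambda\gamma<1$, or fall back on finiteness of the game tree.
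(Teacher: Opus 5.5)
Your proposal is correct and follows essentially the same route as the paper. For unbiasedness it uses the same telescoping and tower-property argument with a state-only baseline. On the GAE side it uses the same identification $\delta_{i,u}=A_i^{\vpi}(s_u,a_u)$, whose orthogonal martingale increments give a strictly positive error, and on the $Q$-boosting side the same pathwise $O(\epsilon)$ error bound. The only difference is that the paper cancels $e_Q(s_\tau,a_\tau)$ between the leading term and the $u=\tau$ residual, which gives the slightly sharper threshold $\xi^\lambda_{i,\tau}=\sqrt{\Gamma^\lambda_{i,\tau}}\big/\big((1+\gamma)\sum_{k=0}^{T-\tau-1}(\lambda\gamma)^k\big)$; your cruder $\sqrt{c}/(2(1+T))$ serves equally well for the existence claim.
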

This result shows that, for $\lambda=1$ and on-policy trajectories, the score-function policy-gradient term remains unbiased when the true advantage is replaced by the $Q$-boosting estimator, even with an arbitrary fixed, possibly inaccurate $Q$-critic. For $\lambda\in(0,1]$, whenever exact-$V$ GAE has nonzero future-action noise, a sufficiently accurate $Q$-critic yields lower conditional mean-squared advantage-estimation error than exact-$V$ GAE with the same trace parameter. In practice, we set $\lambda=0.95$ to obtain a good bias-variance tradeoff mirroring GAE~\citep{schulman2015high}.

We note that the state value $V_i^{\vpi}(s;\phi_i)$ is computed by enumerating the available actions in practice. A potential concern is that the original action space can be large. We address this issue by decomposing complex decisions into multiple time steps. With proper decomposition, this approach reduces the number of available actions to a manageable number, allowing us to handle Dou Dizhu, which contains at least $27{,}472$ available actions~\citep{yang2022perfectdou}. See Appendix~\ref{sec:game-definition} for details.


\section{Variance-Reduced Policy Optimization}

In this section, we introduce \textbf{Variance-Reduced Policy Optimization (VRPO)}, a variance-reduced variant of \emph{Proximal Policy Optimization} (PPO)~\citep{schulman2017proximal} that leverages the Q-boosting estimator~\eqref{eq:q-boosting-advantage}. The complete pseudo-code of the algorithm can be found in Algorithm~\ref{algo:vrpo} in Appendix~\ref{sec:algo-vrpo}.

Compared to PPO, VRPO retains the same clipped policy gradient surrogate objective and trust-region-style updates, but modifies both the advantage estimation and critic learning procedures to reduce the variance of the policy gradient.

\subsection{Objectives and Variance-Reduced Estimation}

The core of VRPO replaces the GAE-based advantage used in PPO with the proposed $Q$-boosting estimator~\eqref{eq:q-boosting-advantage}, while retaining the clipped policy-gradient surrogate objective. Given a trajectory~$h = (s_0, a_0, s_1, \dots, s_t)$ sampled under the joint reference policy~$\vpi^{\mathrm{ref}} = (\pi^{\mathrm{ref}}_1, \dots, \pi^{\mathrm{ref}}_n)$, the surrogate loss for player~$i$ is defined as
\begin{align} \label{eq:loss-pg-term}
    \cL^{\mathrm{pg}}_i(h, \theta_i) \triangleq - \!\!\sum_{\tau: i(s_\tau) = i}\!\! 
    \min\!\bigg\{
    \frac{\pi_{i}(a_{\tau} \mid o(s_{\tau}); \theta_{i})}{\pi_i^{\mathrm{ref}}(a_{\tau} \mid o(s_{\tau}))} \hat A_{i,\tau}^{\mathrm{boost}}, 
    \mathrm{clamp}_{[1-\eps, 1+\eps]}\bigg(\frac{\pi_{i}(a_{\tau} \mid o(s_{\tau}); \theta_{i})}{\pi_i^{\mathrm{ref}}(a_{\tau} \mid o(s_{\tau}))}\bigg)\hat A_{i,\tau}^{\mathrm{boost}} \bigg\}. 
\end{align}

Following prior work in self-play RL~\citep{sokota2022unified,rudolph2025reevaluating}, we include an explicit policy-regularization term $\cL^{\mathrm{reg}}_i(h, \theta_i)$ to promote exploration and encourage last-iterate convergence. The policy is regularized using a KL-divergence penalty relative to a fixed reference distribution, which is chosen in practice to be uniform over the action space:
\begin{align} \label{eq:loss-reg-term}
    \cL^{\mathrm{reg}}_i(h, \theta_i) \triangleq \!\!\sum_{\tau: i(s_\tau) = i}\!\!  \mathrm{KL} \big(\pi_i(\cdot \mid o(s_{\tau}); \theta_i) \mmid \mathrm{Unif} \big) .
\end{align}

For the critic, VRPO employs a centralized $Q$-critic with full-information inputs, as in MAPPO~\citep{yu2022surprising}. The critic is trained by minimizing the squared regression loss
\begin{align} \label{eq:loss-val-term}
    \cL^{\mathrm{val}}_i(h, \phi_i) \triangleq \sum_{\tau \geq 0}
    \frac{1}{2} \big(Q_i(s_\tau, a_\tau; \phi_i) - Q^{\mathrm{target}}_i(s_{\tau}, a_{\tau})\big)^2,
\end{align}
where the target~$Q^{\mathrm{target}}_i$ is defined via the $Q$-boosting update in~\eqref{eq:q-boosting-q-target}.

\subsection{Training Procedure}

VRPO follows the standard PPO training schedule, alternating between on-policy actor updates and centralized critic regression. At the beginning of each iteration, the current policies are frozen as reference policies: $\pi^{\mathrm{ref}}_i \gets \pi_i(\cdot; \theta_i)$ for each player~$i \in \llbracket n \rrbracket$. Using the joint reference policy~$(\pi^{\mathrm{ref}}_1, \dots, \pi^{\mathrm{ref}}_n)$, the algorithm collects a batch of~$B$ rollout trajectories, denoted by~$\cD_{\mathrm{rollout}}$.

The actor parameters are optimized using only the on-policy rollout data. In each of the $K_{\mathrm{actor}}$ epochs, the rollout batch is partitioned into~$M$ minibatches, and each player updates its policy parameters by minimizing the PPO-style objective $ \cL^{\mathrm{pg}}_i(\cdot, \theta_i) + \alpha \cdot \cL^{\mathrm{reg}}_i(\cdot, \theta_i),$ where~$\alpha$ controls the strength of regularization, balancing the policy-gradient surrogate~\eqref{eq:loss-pg-term} and policy regularization~\eqref{eq:loss-reg-term}. This update follows the standard PPO minibatch training procedure. 

During actor optimization, VRPO recomputes the policy-expectation terms in
$\hat A^{\mathrm{boost}}_{i,\tau}$ using the current actor probabilities while keeping the critic values fixed, and uses the resulting advantages as ordinary stop-gradient PPO coefficients. This recomputation adds negligible overhead, since evaluating the PPO ratio already requires the current actor probabilities; the additional operation is only a policy-weighted sum over critic values.

The centralized $Q$-critics are trained for~$K_{\mathrm{critic}}$ epochs by minimizing the regression loss~$\cL^{\mathrm{val}}_i(\cdot, \phi_i)$, defined in~\eqref{eq:loss-val-term}. As an implementation choice, we additionally use a small cyclic replay buffer $\cD_{\mathrm{replay}}$ only for critic regression to improve sample diversity and critic stability. We remark that the actor update still uses only freshly collected rollout trajectories, so the policy-gradient update remains on-policy. Our ablation study in Appendix~\ref{sec:ablate-vrpo-replay-buffer} shows that setting $|\cD_{\mathrm{replay}}| = |\cD_{\mathrm{rollout}}|$, which is equivalent to removing the additional replay buffer, gives comparable exploitability. This indicates that the empirical gains do not rely on off-policy critic replay.

\section{Evaluations}

In this section, we evaluate the empirical performance of VRPO across a range of imperfect-information games, spanning from mid-sized benchmarks, where exact exploitability can be computed, to large-scale domains where only empirical performance is available. We provide a brief description of each game in Appendix~\ref{sec:game-description}, training details in Appendix~\ref{sec:training}, evaluation methodology in Appendix~\ref{sec:evaluation}, game-specific encodings in Appendix~\ref{sec:game-definition}, and training dynamics in Appendix~\ref{sec:dynamics}. Notably, we use the \emph{same} network architecture and hyperparameters across all games, adjusting only the number of training iterations and batch sizes to account for differences in game complexity.

\begin{figure*}[!t]
    \centering
    \includegraphics[scale=0.8]{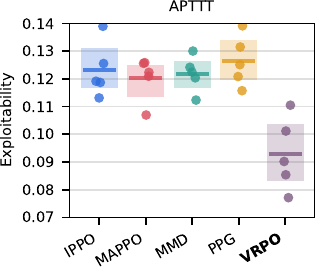}
    \hspace{8pt}
    \includegraphics[scale=0.8]{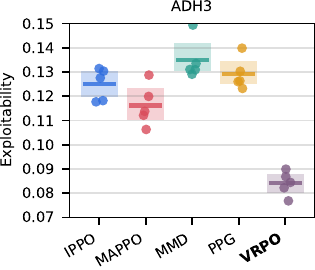}
    \hspace{8pt}
    \includegraphics[scale=0.8]{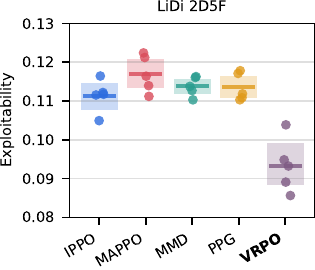}
    \caption{
        Exact exploitability (lower is better) of agents in various games under a shared training schedule. Each configuration is trained with $5$ different seeds.
    }
    \label{fig:exact-exploitability-various-algorithm}
    \vspace{-6pt}
\end{figure*}

\subsection{Mid-Sized Games with Exact Exploitability}

We benchmark VRPO on mid-sized games where exact exploitability can be computed: the number of information sets fits in memory, and full game-tree traversal is computationally feasible. These benchmarks include Abrupt Phantom Tic-Tac-Toe (APTTT), Abrupt Dark Hex 3 (ADH3), and Liar's Dice with 2 Dice and 5 Faces (LiDi 2D5F). The sizes of these games are listed in Table~\ref{tab:game-sizes}. We include the size of Leduc Hold'em~\citep{southey2012bayes}, a common benchmark for CFR-style algorithms for reference.

\begin{wraptable}{r}{0.5\textwidth}
    \vspace{-9pt}
    \centering
    \small
    \setlength{\tabcolsep}{6pt}
    \renewcommand{\arraystretch}{1.15}
    \begin{tabular}{@{}lcc@{}}
    \toprule
    \textbf{Game} &  Infosets & Game States \\
    \midrule
    Leduc Hold'em & $936$ & $9{,}457$ \\
    APTTT & $23.3 \times 10^6$ & $27.1 \times 10^9$ \\
    ADH3  & $27.3 \times 10^6$ & $29.3 \times 10^9$ \\
    LiDi 2D5F & $52.4 \times 10^6$ & $1.31 \times 10^9$ \\
    \bottomrule
    \end{tabular}
    \caption{Game sizes in benchmarks}
    \label{tab:game-sizes}
\end{wraptable}

We compare VRPO with several policy-gradient-based baselines, including Independent PPO (IPPO)~\citep{schulman2017proximal,de2020independent}, MAPPO~\citep{yu2022surprising}, MMD~\citep{sokota2022unified}, and PPG~\citep{cobbe2021phasic}. All algorithms are trained for $2{,}000$ iterations with a batch size of $B = 2048$ trajectories. Learning rates and other optimization hyperparameters are matched across methods, while algorithm-specific hyperparameters follow the default settings provided by the CleanRL library~\citep{huang2022cleanrl}.
To ensure comparable compute budgets, we use $K_{\mathrm{actor}} = 4$ actor epochs and $K_{\mathrm{critic}} = 4$ critic epochs per iteration in VRPO, each with $M = 4$ minibatches. This mirrors the standard PPO training schedule of $K = 4$ epochs and $M = 4$ minibatches for both policy and value updates, as implemented in CleanRL. 

We report the results in Figure~\ref{fig:exact-exploitability-various-algorithm}. Baseline algorithms are tuned and outperform the results reported by \citet{rudolph2025reevaluating}. VRPO consistently achieves more than $15\%$ lower exploitability than all baselines across these games. More hyperparameter-ablation experiments are provided in Appendices~\ref{sec:ablate-hyper} and~\ref{sec:ablate-vrpo}, including PPO clipping coefficient $\eps$, regularization coefficient $\alpha$, and trace parameter $\lambda$.

\subsection{Dou Dizhu}

\begin{wrapfigure}{r}{0.5\textwidth}
    \vspace{-9pt}
    \centering
    \includegraphics[scale=0.8]{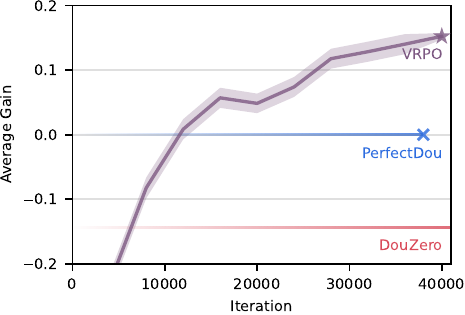}
    \vspace{-6pt}
    \caption{
        Role-averaged gain (higher is better; values above zero indicate a win) against PerfectDou~\citep{yang2022perfectdou}, over the course of VRPO training. 
    }
    \vspace{-6pt}
    \label{fig:doudizhu-training}
\end{wrapfigure}

We apply VRPO to Dou Dizhu, a three-player game between one Landlord and two independent Peasants. The game comprises at least $10^{53}$ information sets, with each information set averaging a size of $10^{23}$~\citep{zha2019rlcard, zha2021douzero}. Dou Dizhu is a standard benchmark for self-play RL, with prior agents including DeltaDou~\citep{jiang2019deltadou}, DouZero~\citep{zha2021douzero}, and PerfectDou~\citep{yang2022perfectdou}, and implementations in RLCard~\citep{zha2019rlcard} and OpenSpiel~\citep{lanctot2019openspiel}.

We train the agent for $40{,}000$ iterations with a batch size of $B=2048$ trajectories, which takes approximately $55$~hours on $4{\times}$ RTX~5090 GPUs. This corresponds to approximately $2.65 \times 10^9$ timesteps, closely matching the training budget of PerfectDou~\citep{yang2022perfectdou}, the previous state-of-the-art agent, which used $2.5 \times 10^9$ timesteps.

We evaluate the role-averaged performance of the VRPO agent against PerfectDou~\citep{yang2022perfectdou}, the previous state-of-the-art trained by MAPPO, in a setting where one agent plays the Landlord and the other controls the two independent Peasants. Figure~\ref{fig:doudizhu-training} shows that VRPO surpasses PerfectDou after only about $40\%$ as many training timesteps as PerfectDou, despite relying solely on terminal rewards rather than a manually designed loss function. Each evaluation is averaged over $10{,}000$ duplicate deals, except for the final policy, which is evaluated over $100{,}000$ duplicate deals. We include DouZero~\citep{zha2021douzero} as an additional reference point against PerfectDou; DouZero was trained for approximately $10^{10}$ timesteps, corresponding to more than $150{,}000$ iterations under our training schedule.

We also evaluate the final VRPO checkpoint against prior baselines in Table~\ref{tab:doudizhu-avg}, with each matchup evaluated over $100{,}000$ duplicate deals. The reported uncertainty is $1.0 \times \mathrm{SEM}$. We further report position-specific performance in Table~\ref{tab:doudizhu-by-position} in Appendix~\ref{sec:more-doudizhu-figures}. VRPO achieves consistently strong performance and outperforms PerfectDou and DouZero across all matchups, suggesting that VRPO scales effectively to large-scale imperfect-information games and establishes a new state of the art.

\begin{table}[!h]
    \vspace{-3pt}
    \centering
    \small
    \setlength{\tabcolsep}{6pt}
    \renewcommand{\arraystretch}{1.15}
    \begin{tabular}{@{}lccc@{}}
    \toprule
    \textbf{Hero $\backslash$ Villain} & \textbf{VRPO} & PerfectDou & DouZero \\
    \midrule
    \textbf{VRPO (ours)} & --- & $0.152 \pm 0.004$ & $0.286 \pm 0.004$ \\
    PerfectDou & $-0.152  \pm 0.004$ & --- & $0.141 \pm 0.004$ \\
    DouZero  & $-0.286  \pm 0.004$ & $-0.141 \pm 0.004$ & --- \\
    \bottomrule
    \end{tabular}
    \vspace{6pt}
    \caption{
        Average gain of the row agent playing against the column player.
    }
    \label{tab:doudizhu-avg}
    \vspace{-12pt}
\end{table}

\subsection{Heads-Up No-Limit Texas Hold'em}

\begin{wrapfigure}{r}{0.5\textwidth}
    \vspace{-9pt}
    \centering
    \includegraphics[scale=0.8]{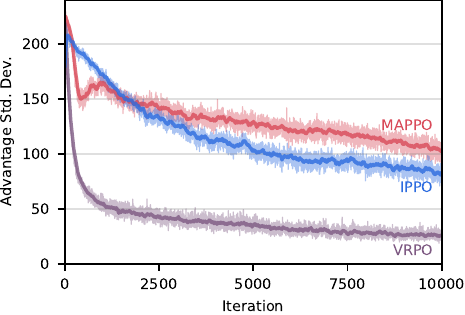}
    \vspace{-6pt}
    \caption{
        Comparison of the standard deviation of the used advantage~$\hat A$ during the first $10{,}000$ steps of training under different methods.
    }
    \vspace{-6pt}
    \label{fig:advantage-variance}
\end{wrapfigure}

We apply VRPO to Heads-Up No-Limit Texas Hold'em with an initial stack of $200$ big blinds. The agent is trained for $40{,}000$ iterations with a batch size of $B = 8192$ trajectories, totaling approximately $1.51 \times 10^9$ timesteps. Training takes roughly $63$~hours on $4{\times}$ RTX~5090 GPUs.

We compare training with the IPPO and MAPPO baselines. As shown in Figure~\ref{fig:advantage-variance}, VRPO significantly reduces the variance of the advantage compared to the baseline algorithms.

We evaluate the performance of the VRPO agent using the 2022 version of Slumbot~\citep{ericgjackson_slumbot2019}. Over a session of $1{,}000{,}000$ hands, the VRPO agent achieves an average gain of $33 \pm 19$\,mBB/hand. For comparison, an always-folding bot loses $750$\,mBB/hand. While this does not constitute a decisive win, the result is achieved without any form of online search, subgame solving, or blueprint-based raise amounts, demonstrating the effectiveness of VRPO in high-variance, imperfect-information domains.

\section{Conclusion}

In this paper, we identify that sampling future actions in GAE introduces additional variance, which is especially problematic in equilibrium learning, where policies are typically stochastic. We address this with \emph{$Q$-boosting}, which replaces action-sampled multi-step backups with a multi-step Expected SARSA$(\lambda)$ trace. Incorporating this new estimator into PPO yields VRPO. Empirically, VRPO achieves strong performance across a range of games at different scales.

\paragraph{Limitations and Scope.} While we show that the approach can be applied to games with large discrete action sets such as Dou Dizhu, games with high-dimensional continuous action spaces are beyond the scope of this paper. Our empirical focus is stochastic self-play with centralized critics; we do not claim broad improvements over GAE in standard single-agent RL benchmarks.

\clearpage

\begin{ack}
This work was supported in part by the National Science Foundation award CCF-2443068, the Office of Naval Research grant N000142512296, and an AI2050 Early Career Fellowship.
\end{ack}

\bibliographystyle{unsrtnat}
\bibliography{refs}


\clearpage
\appendix

{\LARGE Appendix}

\section{Additional Related Work}

One of the major breakthroughs in large-scale equilibrium computation for imperfect-information games was Counterfactual Regret Minimization (CFR)~\citep{zinkevich2007regret} and its variants. In particular, CFR+ enabled Cepheus~\citep{tammelin2015solving} to solve Heads-Up Limit Texas Hold'em. In the No-Limit variant, successor methods such as Libratus~\citep{brown2017libratus}, which introduced a search-based approach, and DeepStack~\citep{moravvcik2017deepstack}, which employed a learning-based approach, defeated professional players. ReBeL~\citep{brown2020combining} later unified these approaches to achieve state-of-the-art performance.

While regret minimization methods have seen significant success in poker, policy optimization approaches have also made substantial progress in other imperfect-information games. Early foundations trace back to Fictitious Play~\citep{brown1951iterative}, while modern approaches typically rely on self-play reinforcement learning. Landmark achievements include OpenAI Five~\citep{berner2019dota}, defeating the world champion Dota 2 team OG using large-scale self-play with PPO~\citep{schulman2017proximal}, and AlphaStar~\citep{vinyals2019grandmaster}, reaching Grandmaster level in StarCraft II using population-based and self-play methods related to PSRO~\citep{lanctot2017unified}.

In board games, AlphaGo and its successor AlphaZero~\citep{silver2017mastering, silver2018general} mastered perfect-information games such as chess, shogi, and Go using Monte Carlo Tree Search. For imperfect-information games, DeepNash~\citep{perolat2022mastering} reached human-expert-level performance in Stratego using Regularized Nash Dynamics. More recently, Ataraxos~\citep{sokota2025superhuman} combined self-play RL with test-time search to achieve vastly superhuman performance in the game of Stratego.

\section{Variance-Reduced Policy Optimization} \label{sec:algo-vrpo}

\begin{algorithm}[!h]
\caption{Variance-Reduced Policy Optimization}
\label{algo:vrpo}
\begin{algorithmic}[1]
\Require Decentralized actors $\{\pi_i(\cdot;\theta_i)\}_{i=1}^n$ 
\Require Centralized critics $\{Q_i(\cdot;\phi_i)\}_{i=1}^n$
\Require Batch size $B$, minibatches $M$
\Require Actor epochs $K_{\mathrm{actor}}$, critic epochs $K_{\mathrm{critic}}$, 
\Require Regularization coefficient $\alpha > 0$
\For{iteration $=1,2,\dots$}
    \State Set reference policies $\pi_i^{\mathrm{ref}} \gets \pi_i(\cdot;\theta_i)$ for all $i$
    \State Collect $B$ trajectories $\cD_{\mathrm{rollout}}$ using joint $\vpi^{\mathrm{ref}}$

    \For{$K_{\mathrm{actor}}$ epochs $\times$ $M$ minibatches}
        \State Sample $|\cB| = B/M$ trajectories $\cB \subset \cD_{\mathrm{rollout}}$
        \State Update each $\theta_i$ to descent \Comment{\textbf{Actor update}}
        \[
        \frac{1}{|\cB|}\sum_{h \in \cB}
        \Big(\cL^{\mathrm{pg}}_i(h,\theta_i)
        + \alpha \cdot \cL^{\mathrm{reg}}_i(h,\theta_i)\Big)
        \]
    \EndFor
    
    \State Add $\cD_{\mathrm{rollout}}$ to cyclic replay buffer $\cD_{\mathrm{replay}}$
    \State Set $\cB \leftarrow \cD_{\mathrm{rollout}}$
    \For{$K_{\mathrm{critic}}$ epochs $\times$ $M$ minibatches}
        \State Update each $\phi_i$ to descent \Comment{\textbf{Critic update}}
        \[
        \frac{1}{|\cB|}\sum_{h \in \cB}
        \cL^{\mathrm{val}}_i(h,\phi_i)
        \]
        \State Resample $|\cB| = B/M$ trajectories $\cB \subset \cD_{\mathrm{replay}}$
    \EndFor
\EndFor
\end{algorithmic}
\end{algorithm}

Within the algorithm, the policy gradient term $\cL^{\mathrm{pg}}_i(h,\theta_i)$ is given by \eqref{eq:loss-pg-term}, the regularization penalty term $\cL^{\mathrm{reg}}_i(h,\theta_i)$ by \eqref{eq:loss-reg-term}, and the value loss term $\cL^{\mathrm{val}}_i(h,\phi_i)$ by \eqref{eq:loss-val-term}.

\clearpage
\section{Proof of Theorem~\ref{thm:qboost-unbiased-vr}}
\label{app:proof-qboost-unbiased-pg}

\begin{proof}
We consider a finite-horizon trajectory
$h=(s_0,a_0,s_1,\ldots,s_T)$ sampled on-policy from the joint policy
profile $\vpi$. If a trajectory terminates early, we pad it with an
absorbing terminal state with zero reward and zero value. Hence the
horizon can be treated as fixed. Since all environment randomness is
represented by the \textsc{Nature} player, the transition after
conditioning on the current state and action is deterministic.

For compactness, write
\[
    \bar Q_i(s,a) \triangleq Q_i(s,a;\phi_i),
    \qquad
    \bar V_i(s) \triangleq
    \sum_{a\in\cA(s)} \vpi(a\mid s)\bar Q_i(s,a).
\]
The true state value satisfies
\[
    V_i^{\vpi}(s)
    =
    \sum_{a\in\cA(s)} \vpi(a\mid s) \cdot Q_i^{\vpi}(s,a).
\]
At terminal states, we set
$\bar V_i(s_T)=V_i^{\vpi}(s_T)=0$.

\paragraph{Unbiasedness of the score-function term.}
We first prove the unbiasedness statement for $\lambda=1$. In this case,
the $Q$-boosting estimator is
\[
\begin{aligned}
    \hat A^{\mathrm{boost}}_{i,\tau}
    &=
    \bar Q_i(s_\tau,a_\tau)-\bar V_i(s_\tau)
    +
    \sum_{u=\tau}^{T-1}
    \gamma^{u-\tau}
    \big(
        r_i(s_u,a_u)
        +
        \gamma \bar V_i(s_{u+1})
        -
        \bar Q_i(s_u,a_u)
    \big).
\end{aligned}
\]
The initial $\bar Q_i(s_\tau,a_\tau)$ term cancels with the
$-\bar Q_i(s_\tau,a_\tau)$ term in the residual at $u=\tau$, so
\[
\begin{aligned}
    \hat A^{\mathrm{boost}}_{i,\tau}
    &=
    \sum_{u=\tau}^{T-1}
        \gamma^{u-\tau} r_i(s_u,a_u)
    - \bar V_i(s_\tau)
    +
    \sum_{u=\tau}^{T-1}
        \gamma^{u-\tau+1}\bar V_i(s_{u+1})  
    -
    \sum_{u=\tau+1}^{T-1}
        \gamma^{u-\tau}\bar Q_i(s_u,a_u).
\end{aligned}
\]
For any future nonterminal state $s_u$ with $u>\tau$,
\[
    \Expt[
        \bar Q_i(s_u,a_u)
        \mid s_u
    ]
    =
    \sum_{a\in\cA(s_u)}
        \vpi(a\mid s_u) \cdot \bar Q_i(s_u,a)
    =
    \bar V_i(s_u).
\]
Therefore, by the tower property, the future $\bar Q_i$ terms become
$\bar V_i$ terms in conditional expectation and telescope:
\[
\begin{aligned}
    \Expt[
        \hat A^{\mathrm{boost}}_{i,\tau}
        \mid s_\tau,a_\tau]
    &=
    \Expt\!\bigg[
        \sum_{u=\tau}^{T-1}
        \gamma^{u-\tau} r_i(s_u,a_u)
        \,\bigg|\, s_\tau,a_\tau
    \bigg]
    - \bar V_i(s_\tau) =
    Q_i^{\vpi}(s_\tau,a_\tau)-\bar V_i(s_\tau).
\end{aligned}
\]
Using the tower property again,
\[
\begin{aligned}
    \Expt[
        \hat A^{\mathrm{boost}}_{i,\tau}
        \cdot \nabla_{\theta_i}\log \pi_i(a_\tau\mid o_\tau;\theta_i)
        \mid s_\tau
    ]
    &=
    \Expt\!\big[
        \big(
            Q_i^{\vpi}(s_\tau,a_\tau)-\bar V_i(s_\tau)
        \big)
        \cdot \nabla_{\theta_i}\log \pi_i(a_\tau\mid o_\tau;\theta_i)
        \mid s_\tau
    \big].
\end{aligned}
\]
The second term is a state-dependent baseline. Since $i(s_\tau)=i$,
conditioning on $s_\tau$ fixes $o_\tau$, 
\[
\begin{aligned}
    \Expt[
        \bar V_i(s_\tau) \cdot 
        \nabla_{\theta_i}\log \pi_i(a_\tau\mid o_\tau;\theta_i)
        \mid s_\tau
    ]
    &=
    \bar V_i(s_\tau)
    \sum_{a\in\cA(s_\tau)}
        \pi_i(a\mid o_\tau;\theta_i) \cdot 
        \nabla_{\theta_i} \log \pi_i(a\mid o_\tau;\theta_i) \\
    &=
    \bar V_i(s_\tau) \cdot 
    \nabla_{\theta_i}
    \sum_{a\in\cA(s_\tau)}
        \pi_i(a\mid o_\tau;\theta_i) \\
    &=0.
\end{aligned}
\]
Thus
\[
    \Expt[
        \hat A^{\mathrm{boost}}_{i,\tau} \cdot 
        \nabla_{\theta_i}\log \pi_i(a_\tau\mid o_\tau;\theta_i)
        \mid s_\tau
    ]
    =
    \Expt[
        Q_i^{\vpi}(s_\tau,a_\tau) \cdot 
        \nabla_{\theta_i}\log \pi_i(a_\tau\mid o_\tau;\theta_i)
        \mid s_\tau
    ].
\]
Similarly, $V_i^{\vpi}(s_\tau)$ is also a state-dependent baseline, so
\[
\begin{aligned}
    \Expt[
        A_i^{\vpi}(s_\tau,a_\tau) \cdot 
        \nabla_{\theta_i}\log \pi_i(a_\tau\mid o_\tau;\theta_i)
        \mid s_\tau
    ]
    &=
    \Expt\!\big[
        \big(
            Q_i^{\vpi}(s_\tau,a_\tau)
            -
            V_i^{\vpi}(s_\tau)
        \big) \cdot 
        \nabla_{\theta_i}\log \pi_i(a_\tau\mid o_\tau;\theta_i)
        \,\big|\, s_\tau
    \big] \\
    &=
    \Expt\!\big[
        Q_i^{\vpi}(s_\tau,a_\tau) \cdot 
        \nabla_{\theta_i}\log \pi_i(a_\tau\mid o_\tau;\theta_i)
        \,\big|\, s_\tau
    \big].
\end{aligned}
\]
Combining the two displays proves
\[
    \Expt[
        \hat A^{\mathrm{boost}}_{i,\tau} \cdot 
        \nabla_{\theta_i}\log \pi_i(a_\tau\mid o_\tau;\theta_i)
        \mid s_\tau
    ]
    =
    \Expt[
        A_i^{\vpi}(s_\tau,a_\tau) \cdot 
        \nabla_{\theta_i}\log \pi_i(a_\tau\mid o_\tau;\theta_i)
        \mid s_\tau
    ].
\]

\paragraph{Mean-squared error of exact-$V$ GAE.}
We next prove the variance-reduction statement. Fix trace parameter
$\lambda\in(0,1]$ and define $\rho\triangleq\lambda\gamma$.
Consider GAE with the exact value critic
$V_i(\cdot)=V_i^{\vpi}(\cdot)$. Its one-step TD residual is
\[
    \delta_{i,u}
    =
    r_i(s_u,a_u)
    +
    \gamma V_i^{\vpi}(s_{u+1})
    -
    V_i^{\vpi}(s_u).
\]
By the Bellman equation for $Q_i^{\vpi}$ and the deterministic-transition
model,
\[
    Q_i^{\vpi}(s_u,a_u)
    =
    r_i(s_u,a_u)
    +
    \gamma V_i^{\vpi}(s_{u+1}).
\]
Hence
\[
    \delta_{i,u}
    =
    Q_i^{\vpi}(s_u,a_u)
    -
    V_i^{\vpi}(s_u)
    =
    A_i^{\vpi}(s_u,a_u).
\]
Therefore,
\[
\begin{aligned}
    \hat A^{\mathrm{GAE}}_{i,\tau}
    -
    A_i^{\vpi}(s_\tau,a_\tau)
    &=
    \sum_{u=\tau}^{T-1}
        \rho^{u-\tau}
        A_i^{\vpi}(s_u,a_u)
    -
    A_i^{\vpi}(s_\tau,a_\tau) =
    \sum_{u=\tau+1}^{T-1}
        \rho^{u-\tau}
        A_i^{\vpi}(s_u,a_u).
\end{aligned}
\]
Define the conditional future-action noise level
\[
    \Gamma^\lambda_{i,\tau}
    \triangleq
    \Expt\!\bigg[
        \bigg(
            \sum_{u=\tau+1}^{T-1}
            \rho^{u-\tau}
            A_i^{\vpi}(s_u,a_u)
        \bigg)^2
        \,\bigg|\, s_\tau,a_\tau
    \bigg].
\]
Then
\[
    \Gamma^\lambda_{i,\tau}
    =
    \Expt\!\big[
        \big(
            \hat A^{\mathrm{GAE}}_{i,\tau}
            -
            A_i^{\vpi}(s_\tau,a_\tau)
        \big)^2
        \,\big|\, s_\tau,a_\tau
    \big].
\]

We now show that the non-value-equivalence condition implies
$\Gamma^\lambda_{i,\tau}>0$. We interpret this condition as follows:
after $(s_\tau,a_\tau)$, there exists a future timestep $u>\tau$ and a
nonterminal state $s$ reached with positive probability under $\vpi$ such
that
\[
    \Pr_{\vpi}(s_u=s \,\big|\, s_\tau,a_\tau)>0,
    \qquad
    \Var_{a\sim\vpi(\cdot\mid s)}
    [
        Q_i^{\vpi}(s,a)
    ]
    >0.
\]
Equivalently, two actions in the support of $\vpi(\cdot\mid s)$ have
different $Q_i^{\vpi}$-values.

Let $\cF_u$ denote the trajectory history up to state $s_u$,
before sampling $a_u$. For every $u>\tau$,
\[
\begin{aligned}
    \Expt[
        A_i^{\vpi}(s_u,a_u)
        \mid \cF_u
    ]
    =
    \sum_{a\in\cA(s_u)}
        \vpi(a\mid s_u) \cdot 
        \big(
            Q_i^{\vpi}(s_u,a)
            -
            V_i^{\vpi}(s_u)
        \big)=0.
\end{aligned}
\]
Thus the future advantage terms form a martingale-difference sequence.
For $u<v$,
\[
\begin{aligned}
    \Expt[
        A_i^{\vpi}(s_u,a_u) \cdot 
        A_i^{\vpi}(s_v,a_v)
        \
        \mid s_\tau,a_\tau
    ]
    &=
    \Expt\!\big[
        A_i^{\vpi}(s_u,a_u)
        \Expt[
            A_i^{\vpi}(s_v,a_v)
            \mid \cF_v
        ]
        \,\big|\, s_\tau,a_\tau
    \big] =0.
\end{aligned}
\]
Therefore the cross terms vanish, and
\[
    \Gamma^\lambda_{i,\tau}
    =
    \sum_{u=\tau+1}^{T-1}
        \rho^{2(u-\tau)}
        \Expt\!\big[
            \big(
                A_i^{\vpi}(s_u,a_u)
            \big)^2
            \,\big|\, s_\tau,a_\tau
        \big].
\]
At any state $s$ satisfying the non-value-equivalence condition,
\[
\begin{aligned}
    \Expt\!\big[
        \big(
            A_i^{\vpi}(s_u,a_u)
        \big)^2
        \,\big|\, s_u=s
    \big]
    &=
    \Var_{a\sim\vpi(\cdot\mid s)}
    [
        Q_i^{\vpi}(s,a)
    ]
    >0.
\end{aligned}
\]
Since such a state is reached with positive probability and
$\rho=\lambda\gamma>0$, at least one term in the above sum is strictly
positive. Hence $\Gamma^\lambda_{i,\tau}>0$.

\paragraph{Mean-squared error of $Q$-boosting.}
We now bound the error of the $Q$-boosting estimator with an approximate
$Q$-critic. Define
\[
    e_Q(s,a)
    \triangleq
    \bar Q_i(s,a)-Q_i^{\vpi}(s,a),
    \qquad
    e_V(s)
    \triangleq
    \bar V_i(s)-V_i^{\vpi}(s).
\]
Since $\bar V_i$ and $V_i^{\vpi}$ are policy averages of $\bar Q_i$ and
$Q_i^{\vpi}$, respectively,
\[
\begin{aligned}
    |e_V(s)|
    &=
    \bigg|
        \sum_{a\in\cA(s)}
            \vpi(a\mid s) \cdot e_Q(s,a)
    \bigg|  \le
    \| \bar Q_i-Q_i^{\vpi} \|_\infty.
\end{aligned}
\]
Let
\[
    \epsilon_Q
    \triangleq
    \| \bar Q_i-Q_i^{\vpi} \|_\infty.
\]
Using the Bellman identity
\[
    Q_i^{\vpi}(s_u,a_u)
    =
    r_i(s_u,a_u)
    +
    \gamma V_i^{\vpi}(s_{u+1}),
\]
the Expected-SARSA residual in $Q$-boosting satisfies
\[
\begin{aligned}
    \delta^+_{i,u}
    &=
    r_i(s_u,a_u)
    +
    \gamma\bar V_i(s_{u+1})
    -
    \bar Q_i(s_u,a_u) =
    \gamma e_V(s_{u+1})
    -
    e_Q(s_u,a_u).
\end{aligned}
\]
Subtracting the true advantage from the $Q$-boosting estimator gives
\[
\begin{aligned}
    \hat A^{\mathrm{boost}}_{i,\tau}
    -
    A_i^{\vpi}(s_\tau,a_\tau)
    &=
    e_Q(s_\tau,a_\tau)
    -
    e_V(s_\tau)
    +
    \sum_{u=\tau}^{T-1}
        \rho^{u-\tau}
        \big(
            \gamma e_V(s_{u+1})
            -
            e_Q(s_u,a_u)
        \big).
\end{aligned}
\]
The $e_Q(s_\tau,a_\tau)$ term cancels with the
$-e_Q(s_\tau,a_\tau)$ term inside the residual at $u=\tau$, so
\[
\begin{aligned}
    \hat A^{\mathrm{boost}}_{i,\tau}
    -
    A_i^{\vpi}(s_\tau,a_\tau)
    &=
    -e_V(s_\tau)
    +
    \gamma e_V(s_{\tau+1}) +
    \sum_{u=\tau+1}^{T-1}
        \rho^{u-\tau}
        \big(
            \gamma e_V(s_{u+1})
            -
            e_Q(s_u,a_u)
        \big).
\end{aligned}
\]
Let $L_\tau\triangleq T-\tau$ be the remaining number of action steps.
Using $|e_Q(s,a)|\le\epsilon_Q$ and $|e_V(s)|\le\epsilon_Q$, we obtain
the pathwise bound
\[
\begin{aligned}
    \big|
        \hat A^{\mathrm{boost}}_{i,\tau}
        -
        A_i^{\vpi}(s_\tau,a_\tau)
    \big|
    &\le
    (1+\gamma)\epsilon_Q
    +
    \sum_{u=\tau+1}^{T-1}
        \rho^{u-\tau}
        (1+\gamma)\epsilon_Q =
    (1+\gamma)\epsilon_Q
    \sum_{k=0}^{L_\tau-1}
        \rho^k .
\end{aligned}
\]
Consequently,
\[
    \Expt\!\big[
        \big(
            \hat A^{\mathrm{boost}}_{i,\tau}
            -
            A_i^{\vpi}(s_\tau,a_\tau)
        \big)^2
        \,\big|\, s_\tau,a_\tau
    \big]
    \le
    (1+\gamma)^2
    \epsilon_Q^2 \cdot 
    \bigg(
        \sum_{k=0}^{L_\tau-1}
        \rho^k
    \bigg)^2 .
\]
Since $\Gamma^\lambda_{i,\tau}>0$, define
\[
    \xi^\lambda_{i,\tau}
    \triangleq
    \frac{
        \sqrt{\Gamma^\lambda_{i,\tau}}
    }{
        (1+\gamma)
        \sum_{k=0}^{L_\tau-1}
        (\lambda\gamma)^k
    }.
\]
This quantity is strictly positive. If
\[
    \lVert Q_i(\cdot;\phi_i)-Q_i^{\vpi}\rVert_\infty
    =
    \epsilon_Q
    <
    \xi^\lambda_{i,\tau},
\]
then
\[
\begin{aligned}
    \Expt\!\big[
        \big(
            \hat A^{\mathrm{boost}}_{i,\tau}
            -
            A_i^{\vpi}(s_\tau,a_\tau)
        \big)^2
        \,\big|\, s_\tau,a_\tau
    \big]
    &<
    \Gamma^\lambda_{i,\tau} =
    \Expt\!\big[
        \big(
            \hat A^{\mathrm{GAE}}_{i,\tau}
            -
            A_i^{\vpi}(s_\tau,a_\tau)
        \big)^2
        \,\big|\, s_\tau,a_\tau
    \big].
\end{aligned}
\]
Thus, for any $\lambda\in(0,1]$, a sufficiently accurate $Q$-critic
yields strictly smaller conditional mean-squared advantage-estimation
error than exact-$V$ GAE whenever the reachable future actions are not
all $Q_i^{\vpi}$-value-equivalent. The theorem follows.
\end{proof}

\clearpage

\section{Games Description} \label{sec:game-description}
\paragraph{Abrupt Phantom Tic-Tac-Toe (APTTT).}
Phantom Tic-Tac-Toe is an imperfect-information version of standard Tic-Tac-Toe on a $3\times 3$ grid: players alternately attempt to place their mark, aiming to achieve three in a row horizontally, vertically, or diagonally. If the grid fills without a winning line, the game is a draw. In Phantom Tic-Tac-Toe, imperfect information arises because the opponent's marks are hidden, players do not directly observe the opponent's moves, and a player only learns whether their own attempted placement succeeds or fails. The \emph{abrupt} variant differs from the classical ruleset in how failed placements are handled: when a player attempts to place a mark on a square occupied by the opponent, the attempt fails and the player \emph{loses their turn} (in the classical ruleset, the player must choose a different square) \citep{lanctot2019openspiel, rudolph2025reevaluating}.

\paragraph{Abrupt Dark Hex.}
Abrupt Dark Hex is an imperfect-information variant of Hex. As in Hex, the first player aims to connect the top and bottom edges, while the second player aims to connect the left and right edges; draws are impossible (Hex theorem). The game becomes imperfect-information because actions are \emph{hidden}: each player observes only their own stones, and the opponent's placements are not directly visible. If a player attempts to place a stone on a cell already occupied by the opponent, the move fails. In the \emph{abrupt} ruleset, the player \emph{loses their turn} (instead of retrying a different cell, as in the classical ruleset) \citep{lanctot2019openspiel, rudolph2025reevaluating}. We use \textbf{Abrupt Dark Hex 3 (ADH3)}, the variant played on a $3\times 3$ board. This is the largest size for which we can evaluate exact policy exploitability within a reasonable computational budget.

\paragraph{Liar's Dice.}
In Liar's Dice, each of two players privately rolls dice. Players then alternate making bids of the form ``$(q,f)$,'' claiming that across both players there are at least $q$ dice showing face $f$. Under the OpenSpiel ruleset, a legal bid must either increase the quantity $q$, or increase the face value $f$ without decreasing $q$. Instead of bidding, a player may call \emph{liar}, which ends the round by revealing all dice: if the previous bid does not exceed the actual count of face $f$, the bidder wins; otherwise, the caller wins. Liar's Dice (including closely related parameterizations) is a standard benchmark for imperfect-information games in prior equilibrium-finding and RL+search work \citep{lanctot2019openspiel, brown2020combining}. We use \textbf{Liar's Dice with 2 Dice of 5 Faces (LiDi 2D5F)}, a variant in which both players roll two dice with five faces each. This is the largest size for which we can compute exact exploitability within the available computational budget.

\paragraph{Dou Dizhu.}
Dou Dizhu (\emph{Fighting the Landlord}) is a three-player imperfect-information shedding game played with a 54-card poker deck, between one landlord and two peasants. Players take turns playing legal card combinations (e.g., singles, pairs, triples, sequences) that must beat the previous play or pass. The goal is to be the first to empty one's hand. The landlord wins if they go out first, while the peasants win if \emph{either} peasant empties their hand first. Rewards are computed using a base score of $2$ for the landlord and $1$ for each peasant if they win, with additional bonuses based on the number of special combinations played: \emph{bombs} (four of a kind) and \emph{rocket} (the two jokers). Dou Dizhu is widely used as a benchmark for self-play RL and imperfect-information decision-making, including end-to-end systems such as DouZero \citep{zha2021douzero} and toolkits/environments such as RLCard \citep{zha2019rlcard}.

\paragraph{Heads-Up No-Limit Texas Hold'em.}
Heads-Up No-Limit Texas hold'em is a two-player imperfect-information betting game. Each hand begins with forced blinds and a shuffle deal: each player receives two private \emph{hole cards}, and up to five \emph{community cards} are revealed over four betting rounds (preflop, flop, turn, river). In each round, players act sequentially and may fold, call/check, or bet/raise any amount up to their remaining stack (the \emph{no-limit} betting structure). The hand ends either when a player folds, awarding the current pot to the opponent, or at showdown after the final betting round, where private cards are revealed and the stronger five-card poker hand wins the pot. Imperfect information arises from the hidden hole cards and stochastic dealing, making HUNL a standard benchmark for equilibrium-finding and decision-making under uncertainty in poker agent \citep{moravvcik2017deepstack, brown2017libratus, lanctot2019openspiel}. We use \textbf{HUNL with a 200 big-blind initial stack (HUNL200)}, where each player starts each hand with a stack of $200$ big blinds of $20{,}000$ chips, with small blind and big blind equal to $50$ chips and $100$ chips respectively. Specifically, to prevent excessively long trajectories in the RL environment, the fifth raise within a betting street is forced to be an all-in.

\clearpage

\section{Training Recipe} \label{sec:training}

\paragraph{RL Environment Overview.}
The RL environment provides, at each timestep:
\begin{itemize}[leftmargin=*]
    \item Per-player tokens describing the differences from the previous step. Notably, the full sequence of tokens over timesteps should contain all information observed by the player. For APTTT or ADH3, this encodes the last move made by the player, as well as whether the placement was successful. For Dou Dizhu games, this encodes the most recently drawn card during the card drawing phase, or the selected card combinations of the active player.
    
    \item Per-player binary feature channels representing the current view of the board. These channels do not encode the full board state. Although the sequence of channels across timesteps contains the complete information, we only use the current timestep's channels for efficiency. For APTTT or ADH3, this indicates whether each square has been played successfully, unsuccessfully, or remains untouched. For Dou Dizhu games, this encodes the player's current hand as well as the cards discarded by each player.
    
    \item Candidate action sets for the active player, each encoded as a variable-length token sequence describing a possible move. We use a similar format as the observation tokens. For complex decision spaces, such as in Dou Dizhu (which may have more than a thousand valid moves), actions are decomposed over multiple timesteps to reduce the candidate set size. For APTTT or ADH3, this consists of all unplayed squares encoded by their positions. For Dou Dizhu, each action is represented as a \emph{prefix} of a valid card combination, breaking the decision into multiple steps.
\end{itemize}

This design is based on the observation that changes between timesteps are relatively small compared to the full game state. Therefore, we avoid encoding the entire game state at each step and instead encode only the changes, supplemented by a current board description to help the model reconstruct the full state. Ideally, this design can be seen as a backtrace of the game state. We use token-based descriptions since the changes are often sparse, making binary channels inefficient.

\paragraph{Training Hyperparameters.}

We use the Muon optimizer~\citep{jordan2024muon} with Nesterov momentum $\beta = 0.95$ and a base learning rate of $\eta_{\mathrm{base}} = 4 \times 10^{-4}$. We apply fixed weight decay $0.01$ to all projection matrices from the Transformer decoder and the MLPs, where RMSNorm is attached to the output of each layer, making the architecture homogeneous.

The algorithm performs $K_{\mathrm{actor}} = 4$ actor epochs and $K_{\mathrm{critic}} = 4$ critic epochs per iteration, each consisting of $M = 4$ minibatches. It employs a base PPO clipping coefficient of $\varepsilon_{\mathrm{base}} = 0.02$ and a base regularization coefficient of $\alpha_{\mathrm{base}} = 0.1$. The rollout batch size is denoted as $|\cD_{\mathrm{rollout}}| = B$, and the replay buffer has a capacity of $|\cD_{\mathrm{replay}}| = 64 \cdot B$, sufficient to store all trajectories from the past 64 iterations. We present an ablation study of VRPO-specific hyperparameters in Appendix~\ref{sec:ablate-vrpo}.
The discount factor of all games is fixed at $\gamma = 1$, and the $Q$-boosting trace parameter is set to $\lambda = 0.95$.

In addition to the total number of iterations $T_{\mathrm{total}}$, the learning rate and regularization schedules are controlled by two decay coefficients: the stable learning rate iteration count $T_{\eta}$ and the stable regularization coefficient iteration count $T_{\alpha}$. The schedule hyperparameters are game-specific and listed in Table~\ref{tab:schedule}.

\begin{table}[!h]
\centering
\small
\setlength{\tabcolsep}{6pt}
\renewcommand{\arraystretch}{1.15}
\begin{tabular}{@{}lcccc@{}}
\toprule
\textbf{Game} & Iteration $T_{\mathrm{total}}$ &  Iteration $T_{\eta}$ & Iteration $T_{\alpha}$ & Batch Size $B$ \\
\midrule
APTTT & $2000$ & $500$ & $500$ & $2048$ \\
ADH3  & $2000$ & $500$ & $500$ & $2048$ \\
LiDi 2D5F & $2000$ & $500$ & $500$ & $2048$ \\
Dou Dizhu & $40000$ & $5000$ & $500$ & $2048$ \\
HUNL200 & $40000$ & $5000$ & $500$ & $8192$ \\
\bottomrule
\end{tabular}
\vspace{6pt}
\caption{Schedule hyperparameters by game.}
\vspace{-12pt}
\label{tab:schedule}
\end{table}

At iteration $T$, the actor learning rate, critic learning rate, and PPO clip coefficient are decayed according to:
\[
    \eta_{\mathrm{actor}} = \eta_{\mathrm{base}} \cdot \min\bigg\{1, \frac{T_{\eta}}{T} \bigg\}, \qquad
    \eta_{\mathrm{critic}} = \eta_{\mathrm{base}} \cdot \min\bigg\{1, \frac{T_{\eta}}{T} \bigg\}^{0.5}, \qquad 
    \varepsilon = \varepsilon_{\mathrm{base}} \cdot \min\bigg\{1, \frac{T_{\eta}}{T} \bigg\}.
\]
The regularization coefficient decays according to:
\[
    \alpha = \alpha_{\mathrm{base}} \cdot \min\bigg\{1, \frac{T_{\alpha}}{T} \bigg\}^{0.5}.
\]

For Dou Dizhu and HUNL200, both of which require many training iterations, we follow \citet{sokota2025superhuman} by using an exponential moving average of the policy with $\beta=0.999$ as the evaluation policy, effectively merging the policies from the final several thousand iterations. This model-merged policy is not used for training rollouts; it is used only for evaluation. This averaging smooths out the noise introduced by rollout batches at individual gradient steps.

\paragraph{Model Architecture.}

We use the same model architecture across all games and algorithms. At each timestep, observation tokens are embedded into a $128$-dimensional space, summed over all available tokens, and combined with a learned player-identity embedding. The resulting temporal sequence is processed by a small LLaMA-style causal (decoder-only) Transformer~\citep{touvron2023llama} with $4$ layers, $4$ attention heads, and a feedforward expansion ratio of $2$. The decoder outputs are then concatenated with the channel features and passed through a $3$-layer MLP with hidden dimension $256$, equipped with RMSNorm, to produce a per-player state feature.

Each candidate action is encoded independently by summing its token embeddings (each of dimension $256$) and applying an action MLP, similar to the state MLP but with only $2$ layer. Action scores are computed by summing the elementwise product between the $256$-dimensional state feature and each action feature, followed by a learned linear projection. The actor architecture is shown in Figure~\ref{fig:arch}.

\begin{figure}[!h]
    \centering
    \scalebox{0.8}{
\begin{tikzpicture}[
    every node/.style={font=\small},
    module/.style={
        rectangle,
        draw,
        thick,
        minimum width=72pt,
        rounded corners,
    },
    operator/.style={
        circle,
        draw,
        thick,
        minimum width=10pt,
    },
    stateModule/.style={fill=fancyBlue!40},
    actionModule/.style={fill=fancyRed!40},
    finalModule/.style={fill=fancyColor!40},
    h1module/.style={module, minimum height=18pt},
    h2module/.style={module, minimum height=36pt},
    line/.style={-, thick},
    trans/.style={->, shorten >=1pt, >={Stealth[round]}, thick},
]
    
    \node [anchor=center] at (0, -7) (actor-logits) {action logits};
    
    \node [operator, finalModule, opacity=0.36] at ($(0, -6) + (14pt, 10pt)$) (actor-head-2) {};
    \draw [trans, opacity=0.36] (actor-head-2) to ([xshift=14pt] actor-logits.north);
    \node [operator, finalModule, opacity=0.6] at ($(0, -6) + (7pt, 5pt)$) (actor-head-1) {};
    \draw [trans, opacity=0.6] (actor-head-1) to ([xshift=7pt] actor-logits.north);
    \node [operator, finalModule] at (0, -6) (actor-head) {};
    \draw [trans] (actor-head) to (actor-logits);
    \node at ($(actor-head)$) {$\cdot$};
    \node [anchor=north east, opacity=0.5] at ($(actor-head) + (-2pt, 0)$) {inner product};

    \node [h2module, align=center, stateModule] at (-2.4, -4.5) (state-mlp) {State MLP};
    \draw [line, opacity=0.36] (state-mlp) |- (actor-head-2);
    \draw [line, opacity=0.6] (state-mlp) |- (actor-head-1);
    \draw [line] (state-mlp) |- (actor-head);
    \node [anchor=north east, opacity=0.5] at ($(state-mlp) + (0, -20pt)$) {256 feats};

    \node [operator, stateModule] at (-2.4, -3) (state-concat) {};
    \node [anchor=north east, opacity=0.5] at ($(state-concat) + (-4pt, 0)$) {concat};
    \draw [trans] (state-concat) to (state-mlp);

    \node [h2module, align=center, stateModule] at (-2.4, -1.5) (state-decoder) {Causal \\ Transformer};
    \draw [trans] ($(state-decoder)+(-48pt,0)$) to (state-decoder);
    \draw [trans] (state-decoder) to ($(state-decoder)+(48pt, 0)$);
    \draw [trans] (state-decoder) to (state-concat);
    \node [anchor=north east, opacity=0.5] at ($(state-decoder) + (0, -20pt)$) {128 feats};

    \node [anchor=center] at (-2.4, 0) (state-tokens) {SB raises to 4};
    \node [anchor=north east, opacity=0.5] at ($(state-tokens) + (0, -6pt)$) {tokens};
    \draw [trans] (state-tokens) to (state-decoder);
    
    \node [anchor=center] at (0, 0) (state-channel) {total pot 6, ...};
    \draw [trans] (state-channel) |- (state-concat);
    \node [anchor=north west, align=left, opacity=0.5] at ($(state-channel) + (0, -6pt)$) {binary \\ feats};
    
    \node [anchor=center, opacity=0.5] at ($(-1.2, 0) + (0, 12pt)$)  {state representations};
    

    \node [h2module, align=center, actionModule, opacity=0.36] at ($(2.4, -4.5) + (14pt, 10pt)$) (action-mlp-2) {};
    \draw [line, opacity=0.36] (action-mlp-2) |- (actor-head-2);
    \node [h2module, align=center, actionModule, opacity=0.6] at ($(2.4, -4.5) + (7pt, 5pt)$) (action-mlp-1) {};
    \draw [line, opacity=0.6] (action-mlp-1) |- (actor-head-1);
    \node [h2module, align=center, actionModule] at (2.4, -4.5) (action-mlp) {Action MLP};
    \draw [line] (action-mlp) |- (actor-head);

    \node [anchor=center] at (2.4, 0) (action-tokens) {BB calls};
    
    \draw [trans, opacity=0.36] ([xshift=14pt] action-tokens.south) to (action-mlp-2);
    \draw [trans, opacity=0.6] ([xshift=7pt] action-tokens.south) to (action-mlp-1);
    \draw [trans] (action-tokens) to (action-mlp);

    \node [anchor=center, opacity=0.5] at ($(2.4, 0) + (0, 12pt)$) {available actions};
    
\end{tikzpicture}
}
    \caption{
        Actor architecture.
        The observation history is encoded into features by a decoder-only Transformer, combined with binary channel features, and scored against independently encoded action features via inner product to produce action logits.
        The critic is built on the actor architecture, with an additional residual MLP after the state MLP to fuse state features from all players. The architecture is shared across all games; see Appendix~\ref{sec:training} for details.
    }
    \label{fig:arch}
\end{figure}

The critic mirrors the per-player history/state encoder for all players (with separate parameters), fuses the concatenated per-player state features using a $2$ layer residual MLP mixer of dimension $256$, and outputs per-player values for each candidate action using the same state-action interaction and a linear output head. In models where only a value head is required, the elementwise product is replaced by the state feature alone.

\clearpage
\section{Evaluation Details} \label{sec:evaluation}
\subsection{Abrupt Phantom Tic-Tac-Toe and Abrupt Dark Hex 3}

We use the \texttt{exp-a-spiel} library~\citep{rudolph2025reevaluating}, which employs a traverser to compute the exact exploitability from the tabular strategy. The tabular strategy is generated by enumerating all information sets and evaluating the actor network to obtain the policy distribution from model logits at temperature $\tau = 1$.

\subsection{Liar's Dice}

We implement a traverser routine to compute the exact exploitability from the tabular strategy. This routine is cross-verified against the \texttt{OpenSpiel} library~\citep{lanctot2019openspiel} on smaller Liar's Dice games. The tabular strategy is generated by enumerating all information sets and evaluating the actor network to obtain the policy distribution from model logits at temperature $\tau = 1$.

\subsection{Dou Dizhu}

We integrate our model with the evaluation routine of PerfectDou~\citep{yang2022perfectdou} by converting their encoding to a semantic string representation that describes the information set. This is then converted into the game encoding presented in Appendix~\ref{sec:game-description}. The semantic string encoding creates an information barrier that prevents potential information leakage. We compute our policy by evaluating the actor network to obtain the policy distribution from model logits at temperature $\tau = 0$.

\subsection{No Limit Texas Hold'em}

We evaluate the performance by querying the Slumbot API.\footnote{https://slumbot.com/} We compute our policy by evaluating the actor network to obtain the policy distribution from model logits at temperature $\tau = 0.1$. 

We remark that the external regularization term $\cL^{\mathrm{reg}}$ introduces additional exploration. This leads to potential adversarial actions aimed at ensuring the policy remains unexploitable. For example, the training process must handle all-in opens to prevent the policy from being exploitable by such strategies, even though these actions are far from equilibrium play. Reducing the inference temperature is the simplest way to eliminate such actions from the agent's behavior; however, this inevitably introduces additional distribution shifts. Exploring more sophisticated approaches, such as clean-up using human knowledge~\citep{perolat2022mastering} or test-time search~\citep{sokota2025superhuman}, is a promising direction for future work.

\clearpage
\section{Game Encoding} \label{sec:game-definition}

In this section, we present the game encoding used in the RL environment. An overview of the hyperparameters is shown in Table~\ref{tab:game-parameters}, where \textbf{Tokens} refers to the number of tokens in the dictionary, including the special padding token \texttt{Token~0}; \textbf{Channels} refers to the number of binary channels used to encode the current timestep; \textbf{Actions} refers to the maximum number of available actions in a single timestep; \textbf{Horizon} represents the maximum possible environment steps according to the encoding; \textbf{O tokens} indicates the maximum number of observation tokens shown in a single timestep; and \textbf{A tokens} denotes the maximum number of tokens used in an action.

\begin{table}[!ht]
\centering
\small
\setlength{\tabcolsep}{6pt}
\renewcommand{\arraystretch}{1.15}
\begin{tabular}{@{}lcccccc@{}}
\toprule
\textbf{Game} & Tokens & Channels & Horizon & Actions & O Tokens & A Tokens \\
\midrule
APTTT & $10$ & $29$ & $18$ & $9$ & $3$ & $2$ \\
ADH3  & $10$ & $29$ & $18$ & $9$ & $3$ & $2$ \\
LiDi 2D5F & $16$ & $23$ & $13$ & $11$ & $4$ & $4$ \\
Dou Dizhu & $91$ & $288$  & $177$ & $35$ & $5$ & $5$ \\
HUNL200 & $77$ & $297$ & $100$ & $16$ & $8$ & $8$ \\
\bottomrule
\end{tabular}
\vspace{6pt}
\caption{Game encoding parameters.}
\vspace{-6pt}
\label{tab:game-parameters}
\end{table}

Importantly, for games where the number of available actions is large in the original game logic, such as Dou Dizhu or HUNL, we divide a single decision into multiple environment steps to accommodate the maximum number of available actions within the game horizon.

\subsection{Abrupt Phantom Tic-Tac-Toe and Abrupt Dark Hex 3}

\paragraph{Token Dictionary.}
\begin{itemize}[leftmargin=*]
    \item \texttt{Token~1}: Initial token
    \item \texttt{Token~2--4}: Row token $0 \dots 2$
    \item \texttt{Token~5--7}: Column token $0 \dots 2$
    \item \texttt{Token~8--9}: Player token $0 \dots 1$
\end{itemize}

\paragraph{Binary Channels.}
\begin{itemize}[leftmargin=*]
    \item \texttt{Channel 0--8}: Unobserved cells: $0 \dots 8$
    \item \texttt{Channel 9--17}: Cells $0 \dots 8$ occupied by player~0
    \item \texttt{Channel 18--26}: Cells $0 \dots 8$ occupied by player~1
    \item \texttt{Channel 27--28}: Turn indicator for player $0 \dots 1$
\end{itemize}

\paragraph{Observation Tokens.}
The game starts with an initial token (\texttt{Token~1}). Then, after each move made by the player, they observe three tokens: a row token \texttt{r}, a column token \texttt{c}, and a player token \texttt{i}. These tokens indicate that playing at the corresponding cell was successful if the player token matches the current player, or that the cell was already occupied by the other player if the player token belongs to them. After each opponent's move, the player does not observe anything.

\paragraph{Action Tokens.}
Each available action corresponds to a placement at an untried cell and is encoded by the corresponding row token \texttt{r} and column token \texttt{c}.

\paragraph{Rewards.}
The winning agent is awarded $+1$, the losing agent is awarded $-1$, and both receive $0$ in the case of a draw.

\subsection{Liar's Dice with 2 Dice of 5 Faces}

\paragraph{Token Dictionary.}
\begin{itemize}[leftmargin=*]
    \item \texttt{Token~1}: Initial token
    \item \texttt{Token~2--3}: Player token $0 \dots 1$
    \item \texttt{Token~4}: Bid token
    \item \texttt{Token~5}: Call token
    \item \texttt{Token~6--10}: Face tokens for $1 \dots 5$
    \item \texttt{Token~11--15}: Quantity tokens for $0 \dots 4$
\end{itemize}

\paragraph{Binary Channels.}
\begin{itemize}[leftmargin=*]
    \item \texttt{Channel 0--10}: Last bid: $1 \dots 4$ of face $1 \dots 5$, or no bid yet
    \item \texttt{Channel 11--15}: Private die~0 shows face $1 \dots 5$
    \item \texttt{Channel 16--20}: Private die~1 shows face $1 \dots 5$
    \item \texttt{Channel 21--22}: Turn indicator for player $0 \dots 1$
\end{itemize}

\paragraph{Observation Tokens.}
The game starts with an initial token (\texttt{Token~1}). For each dice roll, the player observes two tokens: a face token \texttt{f} and a player token \texttt{i}, indicating the result of each die. After any player's bid, the player observes four tokens: a player token \texttt{i}, the bid token (\texttt{Token~4}), a quantity token \texttt{c}, and a face token \texttt{f}. These tokens indicate that the corresponding player is making a bid for a certain quantity of the specified face.

\paragraph{Action Tokens.}
Each bidding action corresponds to four tokens: a player token \texttt{i}, the bid token (\texttt{Token~4}), a quantity token \texttt{c}, and a face token \texttt{f}, which exactly match the observation tokens. The call action consists of two tokens: a player token \texttt{i} and the call token (\texttt{Token~5}).

\paragraph{Rewards.}
The winning agent is awarded $+1$, the losing agent is awarded $-1$.

\subsection{Dou Dizhu}

\paragraph{Token Dictionary.}
\begin{itemize}[leftmargin=*]
    \item \texttt{Token~1}: Initial token
    \item \texttt{Token~2--4}: Player tokens -- Landlord, Peasant 1, Peasant 2
    \item \texttt{Token~5--8}: Discard Class -- Solo, Pair, Trio, or Quad
    \item \texttt{Token~9--11}: Discard Class -- Straight, Pair Chain (Pair Sisters), or Trio Chain
    \item \texttt{Token~12}: Discard Class -- Rocket
    \item \texttt{Token~13--27}: Rank tokens -- 3, 4, 5, 6, 7, 8, 9, T, J, Q, K, A, 2, B, R
    \item \texttt{Token~28--38}: Sequence starting point -- 3, 4, 5, 6, 7, 8, 9, T, J, Q, K
    \item \texttt{Token~40--50}: Sequence end point -- 4, 5, 6, 7, 8, 9, T, J, Q, K, A
    \item \texttt{Token~51}: No kickers
    \item \texttt{Token~52--56}: Solo kickers with $1 \dots 5$ cards
    \item \texttt{Token~57--60}: Pair kickers with $1 \dots 4$ cards
    \item \texttt{Token~89}: Pass token
    \item \texttt{Token~90}: Hold token\footnote{Remark: Some token indices are left empty due to internal backward compatibility.}
\end{itemize}

\paragraph{Binary Channels.}
\begin{itemize}[leftmargin=*]
    \item \texttt{Channel~0--53}: Self-remaining cards for each card
    \item \texttt{Channel~54--107}: Landlord public cards for each card
    \item \texttt{Channel~108--161}: Landlord discarded cards for each card
    \item \texttt{Channel~162--215}: Peasant~1 discarded cards for each card
    \item \texttt{Channel~216--269}: Peasant~2 discarded cards for each card
    \item \texttt{Channel~270--284}: Prefix-filling bomb count
    \item \texttt{Channel~285--287}: Turn indicators for Landlord, Peasant 1, and Peasant 2
\end{itemize}

\paragraph{Observation Tokens.}
The game starts with an initial token (\texttt{Token~1}). In the next $17$ timesteps, each of the $17$ private cards is revealed to each player privately using three tokens: a player token \texttt{i}, the hold token (\texttt{Token~90}), and a rank token $r$. In the following $3$ timesteps, the three Landlord public cards from the remaining of the deck are announced to all players using three tokens: the Landlord token (\texttt{Token~2}), the hold token (\texttt{Token~90}), and a rank token~$r$.

During gameplay, the card combinations are encoded across multiple timesteps, depending on the complexity of the card combination.
\begin{itemize}[leftmargin=*]
    \item \textbf{Solo/Pair} combinations are encoded in one timestep with three tokens: a player token~$i$, the Solo token (\texttt{Token~5}) or the Pair token (\texttt{Token~6}), and the rank token~$r$.
    \item \textbf{Trio/Quad} combinations are encoded in multiple timesteps. The first timestep contains three tokens: a player token~$i$, the Trio token (\texttt{Token~7}) or the Quad token (\texttt{Token~8}), and the major rank token~$r$. In the second timestep, the kicker details are specified, and the token count increases to four by including the kicker information. This can be no kicker (\texttt{Token~51}), one solo kicker (\texttt{Token~52}) or one pair kicker (\texttt{Token~57}) for a trio, or two solo kickers (\texttt{Token~53}) or two pair kickers (\texttt{Token~58}) for a quad. The following timesteps encode each kicker as if they were discarded individually as a solo or pair, in increasing order.
    \item \textbf{Straight/Pair Chain} combinations are encoded in two timesteps. The first timestep contains three tokens: a player token~$i$, the Straight token (\texttt{Token~9}) or the Pair Chain token (\texttt{Token~10}), and the sequence starting point token~$r_1$. In the next timestep, the end point of the sequence is determined, and four tokens are presented, including the sequence end point token~$r_2$.
    \item \textbf{Trio Chain} combinations are encoded in multiple timesteps. The first timestep contains three tokens: a player token~$i$, the Trio Chain token (\texttt{Token~11}), and the sequence starting point token~$r_1$. In the second timestep, both the kicker information and the sequence end point are specified together. For a Trio Chain with kickers (airplane with wings), the token sequence contains five tokens, including the sequence end point token~$r_2$ and the kicker information. The following timesteps encode each kicker as if it were discarded individually as a solo or pair, in increasing order.
    \item \textbf{Rocket} is encoded in one timestep with two tokens: a player token~$i$ with Rocket token (\texttt{Token~12}).
    \item \textbf{PASS} is encoded in one timestep with two tokens: a player token~$i$ and the pass token (\texttt{Token~89}).
\end{itemize}
Under this multi-timestep decision encoding, the maximum number of actions required in one timestep is significantly reduced to $35$ actions, compared to $27{,}472$ in DouZero \citep{zha2021douzero}. Notably, the channels are updated as if the kickers were treated as continuous actions by the player: a Trio Chain with three pair kickers is treated as first discarding a Trio Chain with kickers, followed by three additional timesteps for discarding each kicker, which are limited to pairs.

\paragraph{Action Tokens.}
The action tokens are identical to the observation tokens as the actions are public.

\paragraph{Rewards.}
Let $B$ be the total number of quads and rockets played during gameplay. The Landlord is awarded $+2 \times 2^B$ if they win, and $-2 \times 2^B$ if they lose. Each Peasant is awarded $+1 \times 2^B$ if they win, and $-1 \times 2^B$ if they lose.

\subsection{Heads-Up No-Limit Texas Hold'em}

\paragraph{Token Dictionary.}
\begin{itemize}[leftmargin=*]
    \item \texttt{Token~1}: Initial token
    \item \texttt{Token~2--14}: Rank tokens -- 2, 3, 4, 5, 6, 7, 8, 9, T, J, Q, K, A
    \item \texttt{Token~15--18}: Suit tokens -- c, d, h, s
    \item \texttt{Token~19--21}: Action tokens -- Fold, Call, Raise
    \item \texttt{Token~22--25}: Street tokens -- Preflop, Flop, Turn, River
    \item \texttt{Token~26--27}: Player tokens -- Small Blind, Big Blind
    \item \texttt{Token~28--75}: Amount tokens -- Raise to $i \times 16^j \times 10$ chips ($0.1$ big blinds)
    \item \texttt{Token~76}: Raise all-in
\end{itemize}

The raise tokens encode 3 base-16 digits, specifically following amounts: 0, 10, 20, 30, \dots, 150, 160, 320, 480, \dots, 2560, 5120, 7680, \dots, 40960 chips. 

\paragraph{Binary Channels.}
\begin{itemize}[leftmargin=*]
    \item \texttt{Channel~0--33}: $2 \times 17$ channels for the two hole cards: suits c, d, h, s then ranks 2..A
    \item \texttt{Channel~34--118}: $5 \times 17$ channels for the five board cards, same as private hands
    \item \texttt{Channel~119-170}: self hole cards, 52-card multi-hot
    \item \texttt{Channel~171--222}: visible board cards, 52-card multi-hot
    \item \texttt{Channel~223--246}: Total pot, 6 base-4 digits
    \item \texttt{Channel~247--270}: Current player's bet on this street, 6 base-4 digits
    \item \texttt{Channel~271--294}: Opponent's player's bet on this street, 6 base-4 digits
    \item \texttt{Channel~295--296}: Turn indicator for Small Blind and Big Blind
\end{itemize}

\paragraph{Observation Tokens.}
In the encoding, each card dealt is encoded over two timesteps: the rank token is presented in the first timestep, while the suit token is presented in the second timestep. This reduces the number of branches from 52 cards to 13 ranks, as we treat the board dealer as a special player for deterministic transitions.

The game starts with an initial token (\texttt{Token~1}). The next $2 \times 2$ timesteps reveal the two hole cards to each player \emph{simultaneously}. After each betting round, the corresponding board cards are revealed, with each card taking two timesteps. 

For a board card, the first timestep contains one token, the rank token $r$, while the second timestep contains two tokens: the rank token $r$ and the suit token $s$. For a private hole card, the corresponding player token $i$ is also attached to the encoding, so the number of tokens becomes two and three respectively.

In the betting rounds, actions are encoded over multiple timesteps depending on the complexity of the action. For a raise action, the sequence is encoded using three timesteps, each timestep decides one digit of the base-16 numbers. Specifically, in an all-in raise, the raise sequence is replaced by the all-in token. 

The other actions, Call/Check and Fold, are encoded using three tokens in one timestep, containing a street token, a player token $i$ and either the Call token (\texttt{Token~20}) or the Fold token (\texttt{Token~19}).

The environment faithfully represents all trajectories, since actions are never aggregated. The only exception is that, within each betting street, the fifth raise is constrained to be an all-in raise to avoid excessively long training episodes.

\paragraph{Action Tokens.}
The action tokens are identical to the observation tokens as the actions are public.

\paragraph{Rewards.}
The players are awarded a chip change measured in big blinds. Since there are $20{,}000$ chips in the stack, the maximum reward is $200$.

\clearpage
\section{Training Dynamics}
\label{sec:dynamics}

In this section, we present the training dynamics of VRPO on Dou Dizhu and HUNL200, using the default hyperparameters from the training recipe in Appendix~\ref{sec:training}. The solid curve represents an exponential moving average of the metric over training iterations, while the faint shaded trace corresponds to the raw, unsmoothed values.

\begin{figure}[!h]
    \centering
    \includegraphics[scale=0.8]{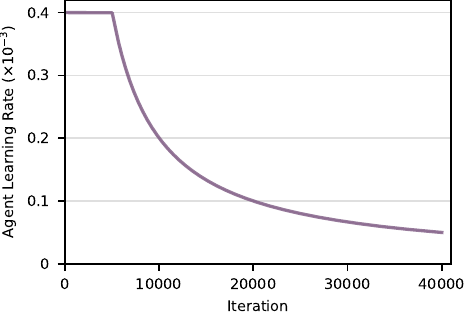}
    \hspace{8pt}
    \includegraphics[scale=0.8]{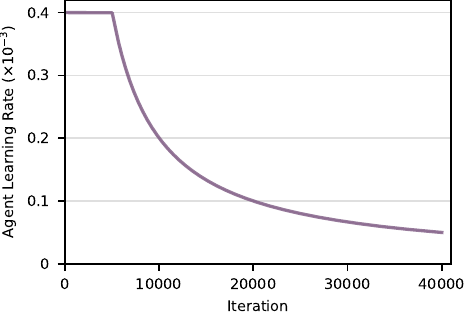}
    \caption{Actor learning rate schedule $\eta_{\mathrm{actor}}$ during training (Left: Dou Dizhu; Right: HUNL200).}
\end{figure}

\begin{figure}[!h]
    \centering
    \includegraphics[scale=0.8]{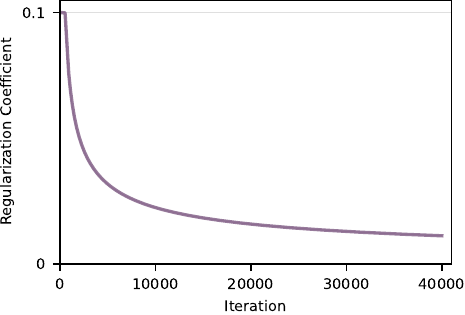}
    \hspace{8pt}
    \includegraphics[scale=0.8]{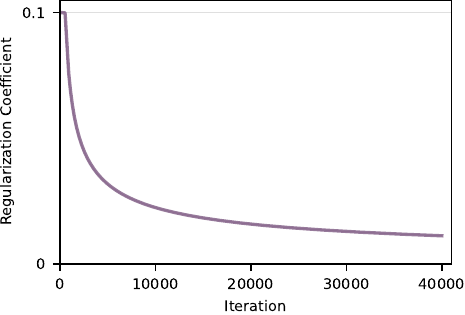}
    \caption{Regularization coefficient schedule $\alpha$ during training (Left: Dou Dizhu; Right: HUNL200).}
\end{figure}

\begin{figure}[!h]
    \centering
    \includegraphics[scale=0.8]{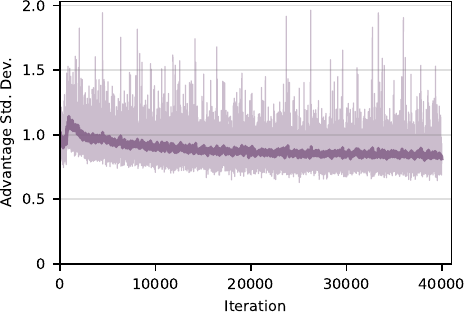}
    \hspace{8pt}
    \includegraphics[scale=0.8]{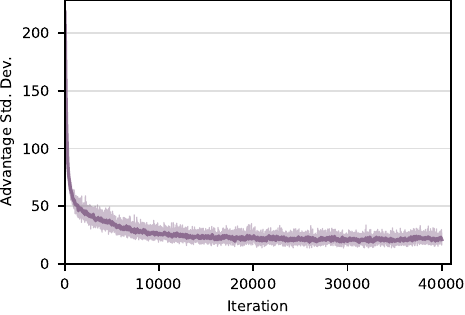}
    \caption{Standard deviation of the estimated advantage $\hat{A}$ for policy gradient during training (Left: Dou Dizhu; Right: HUNL200).}
\end{figure}

\clearpage

\begin{figure}[!h]
    \centering
    \includegraphics[scale=0.8]{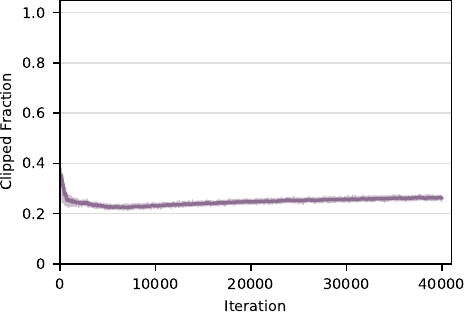}
    \hspace{8pt}
    \includegraphics[scale=0.8]{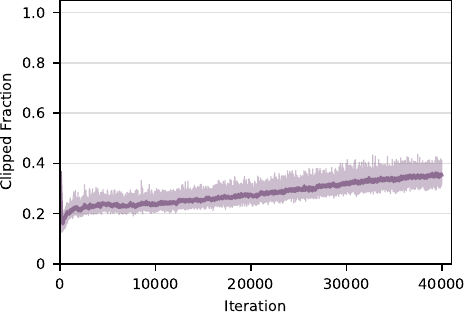}
    \caption{Clipped fraction according to the PPO clipping threshold during training (Left: Dou Dizhu; Right: HUNL200).}
\end{figure}

\begin{figure}[!h]
    \centering
    \includegraphics[scale=0.8]{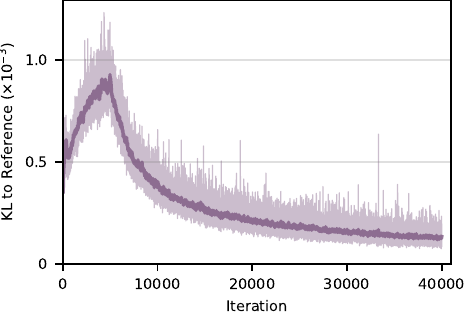}
    \hspace{8pt}
    \includegraphics[scale=0.8]{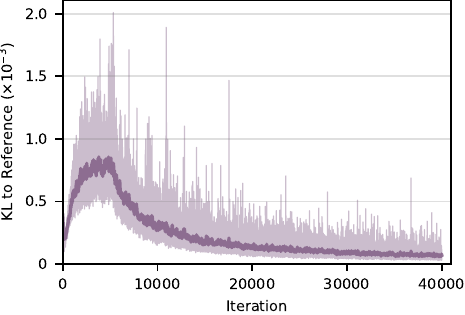}
    \caption{KL divergence to the reference policy $\mathrm{KL}(\vpi \mmid \vpi^{\mathrm{ref}})$ during training (Left: Dou Dizhu; Right: HUNL200).}
\end{figure}

\begin{figure}[!h]
    \centering
    \includegraphics[scale=0.8]{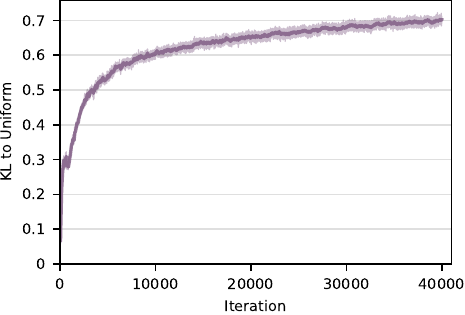}
    \hspace{8pt}
    \includegraphics[scale=0.8]{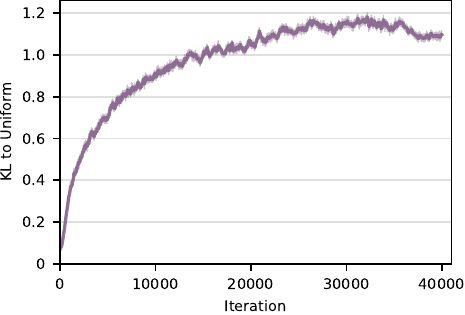}
    \caption{KL divergence to the uniform policy $\mathrm{KL}(\vpi \mmid \mathrm{Unif})$ during training (Left: Dou Dizhu; Right: HUNL200).}
\end{figure}

\clearpage
\begin{figure}[!h]
    \centering
    \includegraphics[scale=0.8]{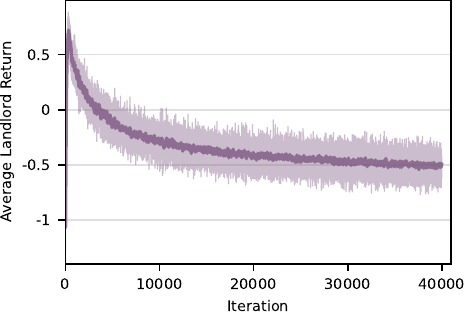}
    \hspace{8pt}
    \includegraphics[scale=0.8]{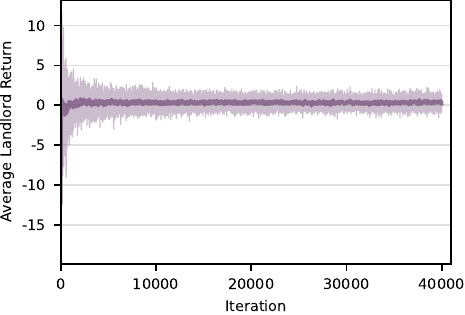}
    \caption{Average return of the first player under the current policy $\vpi$ during training.  (Left: Dou Dizhu; Right: HUNL200).}
\end{figure}

\begin{figure}[!h]
    \centering
    \includegraphics[scale=0.8]{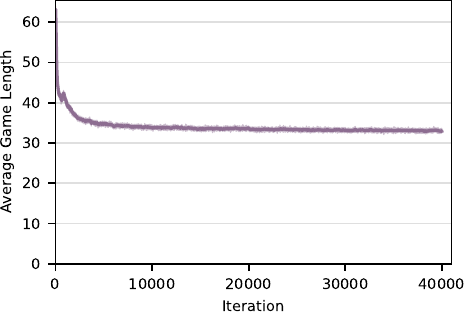}
    \hspace{8pt}
    \includegraphics[scale=0.8]{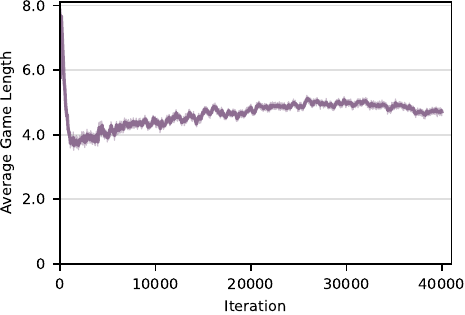}
    \caption{Average gameplay length under the current policy $\vpi$ during training. (Left: Dou Dizhu; Right: HUNL200).}
\end{figure}

\section{More Experiment Results on Dou Dizhu}
\label{sec:more-doudizhu-figures}
\begin{table}[!h]
    \vspace{-6pt}
    \centering
    \small
    \setlength{\tabcolsep}{6pt}
    \renewcommand{\arraystretch}{1.15}
    \begin{tabular}{@{}lccc@{}}
    \toprule
    \textbf{Peasants $\backslash$ Landlord} & \textbf{VRPO} & PerfectDou & DouZero \\
    \midrule
    \textbf{VRPO (ours)} & $0.707 \pm 0.011$ & $0.811 \pm 0.011$ & $0.877 \pm 0.011$ \\
    PerfectDou & $0.507  \pm 0.011$ & $0.601  \pm 0.011$ & $0.656 \pm 0.011$ \\
    DouZero  & $0.304  \pm 0.011$ & $0.373  \pm 0.011$ & $0.453  \pm 0.011$ \\
    \bottomrule
    \end{tabular}
    \vspace{6pt}
    \caption{
        Average total team gain of the row agent playing as two independent Peasants against the column playing agent as the Landlord. 
    }
    \label{tab:doudizhu-by-position}
    \vspace{-12pt}
\end{table}

\clearpage
\section{Ablation Study on General Hyperparameters}
\label{sec:ablate-hyper}

We study the effect of several PPO-style optimization hyperparameters on exact exploitability.
For each sweep, we report the mean exact exploitability over random seeds, with uncertainty shown as
$1.0 \times \mathrm{SEM}$. Lower values indicate stronger policies. We bold the best mean within each
game block.

\subsection{Trace Parameter}
We first study the trace parameter $\lambda$ used in the multi-step advantage estimation. We sweep the parameter over 
$\lambda \in \{0, 0.9, 0.95, 1.0\}$ and report the results in Table~\ref{tab:gae-lambda-sweep}.
Lower values reduce variance by using shorter traces, while larger values incorporate longer-horizon
return information. VRPO remains the best-performing algorithm across the sweep.

\begin{table}[!h]
    \vspace{-6pt}
    \centering
    \small
    \setlength{\tabcolsep}{4pt}
    \renewcommand{\arraystretch}{1.12}
    \begin{tabular}{@{}lccccc@{}}
    \toprule
    \multicolumn{6}{c}{\textbf{ADH3}} \\
    \midrule
    $\lambda$ & \textbf{VRPO} & MAPPO & IPPO & MMD & PPG \\
    \midrule
    0    & $0.080 \pm 0.002$ & $0.106 \pm 0.011$ & $0.180 \pm 0.014$ & $0.180 \pm 0.006$ & $0.154 \pm 0.009$ \\
    0.9  & \boldsymbol{$0.070 \pm 0.001$} & $0.124 \pm 0.014$ & $0.128 \pm 0.004$ & $0.132 \pm 0.011$ & $0.120 \pm 0.002$ \\
    0.95 & $0.079 \pm 0.004$ & $0.125 \pm 0.010$ & $0.138 \pm 0.012$ & $0.139 \pm 0.006$ & $0.124 \pm 0.004$ \\
    1    & $0.078 \pm 0.004$ & $0.116 \pm 0.008$ & $0.139 \pm 0.002$ & $0.144 \pm 0.003$ & $0.137 \pm 0.005$ \\
    \midrule
    \multicolumn{6}{c}{\textbf{APTTT}} \\
    \midrule
    $\lambda$ & \textbf{VRPO} & MAPPO & IPPO & MMD & PPG \\
    \midrule
    0    & \boldsymbol{$0.090 \pm 0.006$} & $0.106 \pm 0.002$ & $0.220 \pm 0.013$ & $0.225 \pm 0.014$ & $0.197 \pm 0.016$ \\
    0.9  & $0.094 \pm 0.004$ & $0.117 \pm 0.003$ & $0.126 \pm 0.005$ & $0.117 \pm 0.001$ & $0.126 \pm 0.003$ \\
    0.95 & $0.090 \pm 0.002$ & $0.117 \pm 0.002$ & $0.120 \pm 0.002$ & $0.120 \pm 0.001$ & $0.126 \pm 0.004$ \\
    1    & $0.091 \pm 0.004$ & $0.116 \pm 0.003$ & $0.126 \pm 0.005$ & $0.128 \pm 0.002$ & $0.129 \pm 0.004$ \\
    \midrule
    \multicolumn{6}{c}{\textbf{LiDi 2D5F}} \\
    \midrule
    $\lambda$ & \textbf{VRPO} & MAPPO & IPPO & MMD & PPG \\
    \midrule
    0    & $0.092 \pm 0.002$ & $0.104 \pm 0.002$ & $0.119 \pm 0.006$ & $0.141 \pm 0.006$ & $0.120 \pm 0.006$ \\
    0.9  & $0.087 \pm 0.002$ & $0.116 \pm 0.002$ & $0.108 \pm 0.004$ & $0.110 \pm 0.005$ & $0.111 \pm 0.002$ \\
    0.95 & $0.092 \pm 0.002$ & $0.117 \pm 0.003$ & $0.111 \pm 0.002$ & $0.115 \pm 0.002$ & $0.116 \pm 0.002$ \\
    1    & \boldsymbol{$0.084 \pm 0.001$} & $0.121 \pm 0.003$ & $0.112 \pm 0.002$ & $0.115 \pm 0.001$ & $0.117 \pm 0.002$ \\
    \bottomrule
    \end{tabular}
    \vspace{6pt}
    \caption{
        Exact exploitability (lower is better) under the trace parameter $\lambda$ sweep.
        Each entry reports mean $\pm$ SEM over seeds. Bold indicates the best mean in each game block.
    }
    \label{tab:gae-lambda-sweep}
    \vspace{-12pt}
\end{table}

\clearpage
\subsection{Regularization Coefficient}
We next vary the base regularization coefficient $\alpha_{\mathrm{base}}$ over $
\alpha_{\mathrm{base}} \in \{0.02, 0.05, 0.1, 0.2\}$, with results shown in
Table~\ref{tab:base-reg-coef-sweep}. This coefficient controls the strength of the regularization term
in the policy update, trading off conservative updates against the ability to move toward improved
policies.

The best VRPO setting is consistent across all three games: $0.1$ achieves the lowest exploitability. Both smaller and larger values are less reliable, indicating that this
regularization term requires a moderate setting to balance optimization stability and policy
improvement.

\begin{table}[!h]
    \vspace{-6pt}
    \centering
    \small
    \setlength{\tabcolsep}{4pt}
    \renewcommand{\arraystretch}{1.12}
    \begin{tabular}{@{}lccccc@{}}
    \toprule
    \multicolumn{6}{c}{\textbf{ADH3}} \\
    \midrule
    $\alpha_{\mathrm{base}}$ & \textbf{VRPO} & MAPPO & IPPO & MMD & PPG \\
    \midrule
    0.02 & $0.232 \pm 0.014$ & $0.134 \pm 0.008$ & $0.141 \pm 0.007$ & $0.185 \pm 0.019$ & $0.152 \pm 0.007$ \\
    0.05 & $0.102 \pm 0.005$ & $0.095 \pm 0.003$ & $0.110 \pm 0.007$ & $0.115 \pm 0.007$ & $0.110 \pm 0.010$ \\
    0.1  & \boldsymbol{$0.086 \pm 0.005$} & $0.115 \pm 0.006$ & $0.126 \pm 0.007$ & $0.146 \pm 0.004$ & $0.136 \pm 0.010$ \\
    0.2  & $0.107 \pm 0.005$ & $0.203 \pm 0.002$ & $0.226 \pm 0.002$ & $0.219 \pm 0.003$ & $0.224 \pm 0.007$ \\
    \midrule
    \multicolumn{6}{c}{\textbf{APTTT}} \\
    \midrule
    $\alpha_{\mathrm{base}}$ & \textbf{VRPO} & MAPPO & IPPO & MMD & PPG \\
    \midrule
    0.02 & $0.325 \pm 0.022$ & $0.173 \pm 0.009$ & $0.148 \pm 0.003$ & $0.142 \pm 0.011$ & $0.147 \pm 0.014$ \\
    0.05 & $0.140 \pm 0.012$ & $0.103 \pm 0.007$ & $0.105 \pm 0.006$ & $0.113 \pm 0.004$ & $0.120 \pm 0.007$ \\
    0.1  & \boldsymbol{$0.089 \pm 0.002$} & $0.122 \pm 0.003$ & $0.132 \pm 0.004$ & $0.129 \pm 0.003$ & $0.121 \pm 0.004$ \\
    0.2  & $0.107 \pm 0.004$ & $0.162 \pm 0.002$ & $0.179 \pm 0.006$ & $0.173 \pm 0.004$ & $0.169 \pm 0.003$ \\
    \midrule
    \multicolumn{6}{c}{\textbf{LiDi 2D5F}} \\
    \midrule
    $\alpha_{\mathrm{base}}$ & \textbf{VRPO} & MAPPO & IPPO & MMD & PPG \\
    \midrule
    0.02 & $0.330 \pm 0.017$ & $0.209 \pm 0.016$ & $0.173 \pm 0.009$ & $0.211 \pm 0.024$ & $0.143 \pm 0.008$ \\
    0.05 & $0.127 \pm 0.005$ & $0.135 \pm 0.002$ & $0.102 \pm 0.005$ & $0.103 \pm 0.001$ & $0.108 \pm 0.002$ \\
    0.1  & \boldsymbol{$0.099 \pm 0.004$} & $0.114 \pm 0.002$ & $0.115 \pm 0.003$ & $0.114 \pm 0.001$ & $0.111 \pm 0.003$ \\
    0.2  & $0.129 \pm 0.002$ & $0.184 \pm 0.003$ & $0.182 \pm 0.001$ & $0.179 \pm 0.003$ & $0.184 \pm 0.002$ \\
    \bottomrule
    \end{tabular}
    \vspace{6pt}
    \caption{
        Exact exploitability (lower is better) under the base regularization coefficient $\alpha_{\mathrm{base}}$ sweep. 
        Each entry reports mean $\pm$ SEM over seeds. Bold indicates the best mean in each game block.
    }
    \label{tab:base-reg-coef-sweep}
    \vspace{-12pt}
\end{table}

\clearpage
\subsection{PPO Clipping Coefficient}
We finally inspect the effect of the base clipping coefficient $\eps_{\mathrm{base}}$ used in the PPO-style policy update.
We sweep the coefficient over $\eps_{\mathrm{base}} \in \{0.02, 0.05, 0.1, 0.2\}$ and present the results in
Table~\ref{tab:base-clip-coef-sweep}. Smaller clipping coefficients impose a tighter constraint on
policy updates, while larger values allow more aggressive updates.

Across all three games, VRPO achieves its best performance at a clipping coefficient of $0.02$.
Increasing the coefficient generally degrades VRPO, especially at $0.2$, suggesting that overly loose
policy updates can hurt stability in these exact-exploitability benchmarks.

\begin{table}[!h]
    \vspace{-6pt}
    \centering
    \small
    \setlength{\tabcolsep}{4pt}
    \renewcommand{\arraystretch}{1.12}
    \begin{tabular}{@{}lccccc@{}}
    \toprule
    \multicolumn{6}{c}{\textbf{ADH3}} \\
    \midrule
    $\eps_{\mathrm{base}}$ & \textbf{VRPO} & MAPPO & IPPO & MMD & PPG \\
    \midrule
    0.02 & \boldsymbol{$0.081 \pm 0.004$} & $0.121 \pm 0.008$ & $0.123 \pm 0.004$ & $0.131 \pm 0.005$ & $0.140 \pm 0.006$ \\
    0.05 & $0.091 \pm 0.008$ & $0.125 \pm 0.002$ & $0.122 \pm 0.008$ & $0.133 \pm 0.006$ & $0.133 \pm 0.007$ \\
    0.1  & $0.125 \pm 0.012$ & $0.135 \pm 0.005$ & $0.134 \pm 0.008$ & $0.171 \pm 0.013$ & $0.138 \pm 0.013$ \\
    0.2  & $0.238 \pm 0.012$ & $0.218 \pm 0.019$ & $0.209 \pm 0.013$ & $0.233 \pm 0.012$ & $0.161 \pm 0.007$ \\
    \midrule
    \multicolumn{6}{c}{\textbf{APTTT}} \\
    \midrule
    $\eps_{\mathrm{base}}$ & \textbf{VRPO} & MAPPO & IPPO & MMD & PPG \\
    \midrule
    0.02 & \boldsymbol{$0.088 \pm 0.002$} & $0.119 \pm 0.003$ & $0.116 \pm 0.002$ & $0.128 \pm 0.005$ & $0.126 \pm 0.003$ \\
    0.05 & $0.103 \pm 0.010$ & $0.118 \pm 0.005$ & $0.116 \pm 0.004$ & $0.123 \pm 0.002$ & $0.117 \pm 0.004$ \\
    0.1  & $0.116 \pm 0.005$ & $0.129 \pm 0.003$ & $0.134 \pm 0.004$ & $0.126 \pm 0.005$ & $0.108 \pm 0.002$ \\
    0.2  & $0.205 \pm 0.001$ & $0.157 \pm 0.012$ & $0.152 \pm 0.004$ & $0.145 \pm 0.008$ & $0.108 \pm 0.007$ \\
    \midrule
    \multicolumn{6}{c}{\textbf{LiDi 2D5F}} \\
    \midrule
    $\eps_{\mathrm{base}}$ & \textbf{VRPO} & MAPPO & IPPO & MMD & PPG \\
    \midrule
    0.02 & \boldsymbol{$0.092 \pm 0.001$} & $0.117 \pm 0.002$ & $0.112 \pm 0.004$ & $0.107 \pm 0.003$ & $0.112 \pm 0.002$ \\
    0.05 & $0.098 \pm 0.003$ & $0.110 \pm 0.002$ & $0.108 \pm 0.006$ & $0.107 \pm 0.001$ & $0.103 \pm 0.003$ \\
    0.1  & $0.125 \pm 0.002$ & $0.130 \pm 0.005$ & $0.116 \pm 0.002$ & $0.124 \pm 0.006$ & $0.105 \pm 0.004$ \\
    0.2  & $0.173 \pm 0.019$ & $0.155 \pm 0.004$ & $0.138 \pm 0.006$ & $0.146 \pm 0.026$ & $0.132 \pm 0.005$ \\
    \bottomrule
    \end{tabular}
    \vspace{6pt}
    \caption{
        Exact exploitability (lower is better) under the base clipping coefficient $\eps_{\mathrm{base}}$ sweep.
        Each entry reports mean $\pm$ SEM over seeds. Bold indicates the best mean in each game block.
    }
    \label{tab:base-clip-coef-sweep}
    \vspace{-12pt}
\end{table}

\clearpage
\section{Ablation Study on VRPO-specific Hyperparameters}
\label{sec:ablate-vrpo}
We study the effect of VRPO-specific hyperparameters on the overall performance of VRPO.

\subsection{Replay Buffer Capacity}
\label{sec:ablate-vrpo-replay-buffer}

We inspect the effect of changing the replay buffer capacity $|\cD_{\mathrm{replay}}|$. We sweep the ratio between the capacity of replay buffer~$|\cD_{\mathrm{replay}}|$ and the rollout batch size $|\cD_{\mathrm{rollout}}|$ over $\{1, 4, 16, 64\}$, and present the results in Figure~\ref{fig:exact-exploitability-replay-buffer-capacity}. While a higher ratio means the critic is trained on more diverse samples, it also introduces additional lag between the stored trajectories and the current policy. When the ratio is $1$, VRPO reduces to using only the current rollout to update the critic without replay, following the standard PPO training scheme.

As shown in the figure, increasing the replay buffer capacity does not degrade performance. In our experiments, we set the replay buffer capacity to $64 \times |\cD_{\mathrm{rollout}}|$ to balance sample divergence and total memory consumption.

\begin{figure*}[!h]
    \centering
    \includegraphics[scale=0.8]{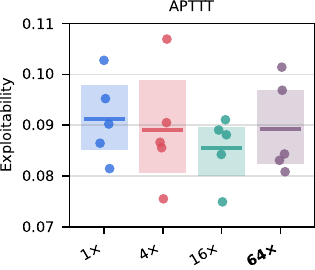}
    \hspace{8pt}
    \includegraphics[scale=0.8]{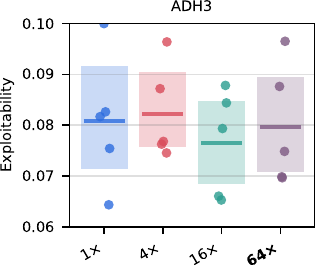}
    \hspace{8pt}
    \includegraphics[scale=0.8]{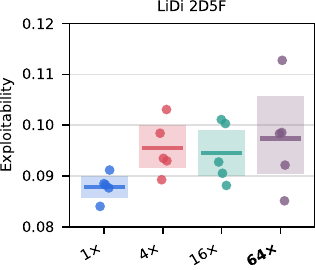}
    \caption{Exact exploitability (lower is better) of agents in APTTT (Abrupt Phantom Tic-Tac-Toe), ADH3 (Abrupt Dark Hex 3), and LiDi 2D5F (Liar's Dice with 2 Dice and 5 Faces), trained with VRPO under different replay buffer capacity. }
    \label{fig:exact-exploitability-replay-buffer-capacity}
\end{figure*}

\subsection{Number of Critic Epochs}

We inspect the effect of changing the number of critic epochs $K_{\mathrm{critic}}$ per iteration. We sweep $K_{\mathrm{critic}} \in \{1, 2, 4, 8\}$ and present the results in Figure~\ref{fig:exact-exploitability-aux-epoch}. As shown in the figure, a larger number of critic epochs leads to a more accurate critic, resulting in more stable performance, as expected. In our main experiments, we set $K_{\mathrm{critic}} = 4$ to balance stability and computational cost, matching the PPO baselines used in CleanRL~\citep{huang2022cleanrl}.

\begin{figure*}[!h]
    \centering
    \includegraphics[scale=0.8]{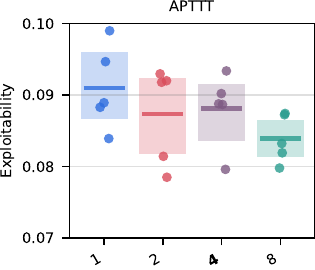}
    \hspace{8pt}
    \includegraphics[scale=0.8]{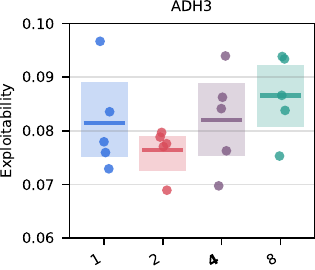}
    \hspace{8pt}
    \includegraphics[scale=0.8]{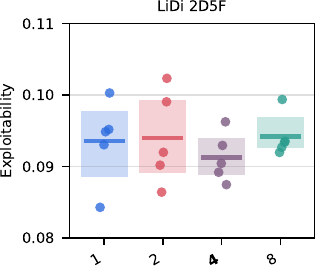}
    \caption{Exact exploitability (lower is better) of agents in APTTT (Abrupt Phantom Tic-Tac-Toe), ADH3 (Abrupt Dark Hex 3), and LiDi 2D5F (Liar's Dice with 2 Dice and 5 Faces), trained with VRPO under different numbers of critic epochs.}
    \label{fig:exact-exploitability-aux-epoch}
\end{figure*}



\end{document}